\newtheorem{theorem}{Theorem}
\newcommand{\tikzxmark}{%
\tikz[scale=0.23] {
    \draw[line width=1,line cap=round] (0,0) to [bend left=6] (1,1);
    \draw[line width=1,line cap=round] (0.2,0.95) to [bend right=3] (0.8,0.05);
}}
\newcommand{\tikzcmark}{%
\tikz[scale=0.23] {
    \draw[line width=1,line cap=round] (0.25,0) to [bend left=10] (1,1);
    \draw[line width=1,line cap=round] (0,0.35) to [bend right=1] (0.23,0);
}}
\def\p{{\boldsymbol p}}
\def\u{{\boldsymbol u}}
\def\X{{\boldsymbol X}}
\def\x{{\boldsymbol x}}
\def\Y{{\boldsymbol Y}}
\def\y{{\boldsymbol y}}
\def\GM{{\mathcal G}}
\def\LM{{\mathcal L}}
\def\OM{{\mathcal O}}
\def\mod{\mathtt{mod}}
\newcommand\unimodal[1]{\cellcolor{gray!25}#1}
\newcommand{\first}[1]{\bf{#1}}
\newcommand{\second}[1]{\underline{#1}}
\newcommand{\std}[1]{{\scriptsize$\pm$#1}}
\def\CREMAD{{CREMAD}}
\def\Kinetics{{KSounds}}
\def\Twitter{{Twitter}}
\def\Sarcasm{{Sarcasm}}
\def\NVGesture{{NVGesture}}
\def\VGGSound{{VGGSound}}
\newtheorem{assumption}{Assumption}
\newtheorem{lemma}{Lemma}
\newtheorem{proof}{Proof}
\newtheorem{thm}{Theorem}
\title{Rethinking Multimodal Learning from the Perspective of Mitigating Classification Ability Disproportion}
\author{
  Qing-Yuan Jiang$^\dag$$^\ddag$, Longfei Huang$^\dag$, Yang Yang$^{\dag}$\thanks{Corresponding author.}\\
  $^\dag$Nanjing University of Science and Technology\\
  $^\ddag$State Key Lab. for Novel Software Technology, Nanjing University, P.R. China\\
  \texttt{\{jiangqy, hlf, yyang\}@njust.edu.cn} \\
}
\begin{document}

\maketitle

\begin{abstract}
Multimodal learning~(MML) is significantly constrained by modality imbalance, leading to suboptimal performance in practice. While existing approaches primarily focus on balancing the learning of different modalities to address this issue, they fundamentally overlook the inherent disproportion in model classification ability, which serves as the primary cause of this phenomenon. In this paper, we propose a novel multimodal learning approach to dynamically balance the classification ability of weak and strong modalities by incorporating the principle of boosting. Concretely, we first propose a sustained boosting algorithm in multimodal learning by simultaneously optimizing the classification and residual errors. Subsequently, we introduce an adaptive classifier assignment strategy to dynamically facilitate the classification performance of the weak modality. Furthermore, we theoretically analyze the convergence property of the cross-modal gap function, ensuring the effectiveness of the proposed boosting scheme. To this end, the classification ability of strong and weak modalities is expected to be balanced, thereby mitigating the imbalance issue. Empirical experiments on widely used datasets reveal the superiority of our method through comparison with various state-of-the-art~(SOTA) multimodal learning baselines. The source code is available at \url{https://github.com/njustkmg/NeurIPS25-AUG}.
\end{abstract}

\section{Introduction}
In recent years, multimodal learning~\cite{MMDL:conf/icml/NgiamKKNLN11,MML:conf/ijcai/YangWZX019,SMV:conf/cvpr/YakeRZD24,LFM:conf/nips/0074WJ024,ReconBoost:conf/icml/CongHua24} has received growing attention for its ability to effectively integrate heterogeneous information. As extra information from multimodal data can be utilized, multimodal learning is expected to achieve better performance compared with unimodal approaches. However, contrary to expectations, multimodal learning has been surprisingly shown to underperform compared to unimodal ones in certain scenarios~\cite{OGR-GB:conf/cvpr/WangTF20,OGM:conf/cvpr/PengWD0H22,AMSS:journals/pami/YangPJXT25}.

The root of this problem lies in the existence of the modality imbalance~\cite{OGR-GB:conf/cvpr/WangTF20}. Concretely, different modalities in a joint-training paradigm typically converge at different speeds~\cite{OGM:conf/cvpr/PengWD0H22,nvGesture:conf/icml/WuJCG22}. The faster-converging modality, i.e., strong modality~\cite{ARM:conf/ijcai/YangYZJ15}, tends to achieve higher performance, while the weak modality performs poorly. Subsequently, this disproportion in classification ability often leads to modality imbalance~\cite{OGR-GB:conf/cvpr/WangTF20}, ultimately resulting in lower performance.

Researchers have explored the modality imbalance issue from various perspectives in multimodal learning~\cite{OGR-GB:conf/cvpr/WangTF20,OGM:conf/cvpr/PengWD0H22,MLA:conf/cvpr/ZhangYBY24}. Given the inconsistent learning progress between strong and weak modalities, a natural idea~\cite{OGR-GB:conf/cvpr/WangTF20,OGM:conf/cvpr/PengWD0H22,AGM:conf/iccv/LiLHLLZ23,MSLR:conf/acl/YaoM22}  is to manually intervene in their learning processes to achieve rebalancing. Another type of method is to bridge the information gap between modality training phases and enhance the interaction between different modalities during training. To be specific, impressive works~\cite{MLA:conf/cvpr/ZhangYBY24,DI-MML:conf/mm/FanXWLG24} such as MLA~\cite{MLA:conf/cvpr/ZhangYBY24}, ReconBoost~\cite{ReconBoost:conf/icml/CongHua24} and DI-MML~\cite{DI-MML:conf/mm/FanXWLG24} focus on bridging the learning gap of different modalities through injecting the optimization information between modalities. 

Although the above methods can rebalance multimodal learning, they focus more on balancing the learning process while failing to enhance the classification ability explicitly. Compared to weaker modalities, stronger modalities typically yield more robust classifiers due to their more sufficient information~\cite{ARM:conf/ijcai/YangYZJ15}. Is there a way to directly improve the performance of weak classifiers to balance the classification performance between strong and weak modalities? A natural choice is boosting~\cite{Adaboost:conf/eurocolt/FreundS95,GradientBoosting:journal/AS/FriedmanJ}, which utilizes the ensemble technique to enhance the ability of the weak classifier. We conduct a toy experiment to illustrate this idea on \CREMAD~dataset~\cite{CREMAD:journals/taffco/CaoCKGNV14}, where the classifier of weak modality is enhanced by the gradient boosting~\cite{GradientBoosting:journal/AS/FriedmanJ}. The results in Figure~\ref{fig:intro} present the comparison among naive MML, a model learning adjustment-based MML approach~(G-Blend~\cite{OGR-GB:conf/cvpr/WangTF20}), and gradient boosting-based MML~(MML w/ GB). For MML w/ GB, we apply the gradient boosting algorithm to further improve the trained video model using naive MML, while keeping the audio model fixed. We can find that the classification gap between video and audio modalities of naive MML and G-Blend is relatively large. More importantly, for MML w/ GB, the accuracy of audio modality remains unchanged, but the accuracy of video is greatly improved, leading to the improvement of overall accuracy. This demonstrates the feasibility and effectiveness of using boosting to balance the classification ability of strong and weak modalities in mitigating modality imbalance.

\begin{wrapfigure}{r}{0.5\textwidth}
\centering
\includegraphics[scale=0.85]{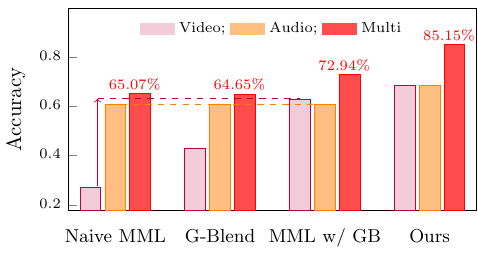}
\caption{Comparison with naive MML, gradient boosting based MML~(MML w/ GB), G-Blend~\cite{OGR-GB:conf/cvpr/WangTF20}, and Ours on \CREMAD~dataset. We find that enhancing the classification performance of the weak modality narrows the performance gap between the two modalities and improves overall performance.}
\label{fig:intro}
\end{wrapfigure}
According to the aforementioned observations, in this paper, we propose a novel multimodal learning approach by designing a sustained boosting algorithm to facilitate the classification ability of the weaker modality. Concretely, we first propose a sustained boosting algorithm by jointly optimizing the classification loss and residual error, aiming to enhance the classification performance of the weak modality. The algorithm takes features extracted by the encoder as input and provides predictions for the given data. Then, we employ the confident score~\cite{OGM:conf/cvpr/PengWD0H22} to monitor the learning status during joint training, and further propose an adaptive classifier assignment~(ACA) strategy to adjust the classifier of weak modality. To this end, we can enhance the classification ability for the weak modality, thereby rebalancing the classification ability of strong and weak modalities. Meanwhile, we theoretically show that, under the boosting algorithm framework, the gap between different losses is provably convergent. In Figure~\ref{fig:intro}, we present the classification enhancement results of our method~(Ours). We can find that the performance of our method outperforms that of MML w/ GB thanks to the sustained boosting and adaptive classifier assignment strategy. Furthermore, it is worth mentioning that ReconBoost~\cite{ReconBoost:conf/icml/CongHua24} also employs the gradient boosting algorithm for multimodal learning. However, unlike our approach, ReconBoost uses gradient boosting to iteratively learn complementary information across modalities. Our main contributions are outlined as follows: 
\begin{itemize}
\item A novel sustained boosting algorithm in MML is proposed. This algorithm aims to simultaneously minimize the classification and residual errors to facilitate the classification ability of the weak modality.
\item A novel adaptive classifier assignment strategy is proposed to dynamically enhance the classification ability of weak modality based on the learning status, thus rebalancing the classification ability of all modalities. 
\item We theoretically analyze the impact of the boosting algorithm on the loss gap between different modalities and prove its convergence. 
\item Experiments reveal that our approach can outperform SOTA baselines to achieve the best performance by a large margin on widely used datasets.
\end{itemize}

\section{Related Work}
\subsection{Rebalanced Multimodal Learning}
The goal of multimodal learning~\cite{MML:conf/ijcai/YangWZX019,MMDL:conf/icml/JiaYXCPPLSLD21,MMDL:conf/icml/NgiamKKNLN11,MMDL:conf/cvpr/SinghHGCGRK22,MML:conf/nips/HuangDXCZH21} is to fuse the multimodal information from diverse sensors. Compared to unimodal methods, multimodal learning can mine data information from different perspectives, thus the performance of multimodal learning should be better~\cite{MML:conf/cvpr/LvCHDL21,MML:conf/nips/SimonyanZ14,MML:conf/cvpr/HuLL16,MML:conf/cvpr/GaoOGT20}. However, due to heterogeneity of multimodal data, multimodal learning often encounters imbalance problems~\cite{OGR-GB:conf/cvpr/WangTF20,ModalCompetition:conf/icml/HuangLZYH22} in practice, leading to performance degeneration of multimodal learning. 

Early pioneering works~\cite{OGR-GB:conf/cvpr/WangTF20,OGM:conf/cvpr/PengWD0H22,PMR:conf/cvpr/Fan0WW023,MML:conf/cvpr/Ge0G00AILZ23} focus more on adaptively adjusting the learning procedure for different modalities. Representative approaches in this category employ different learning strategies, e.g., gradient modulation~\cite{OGM:conf/cvpr/PengWD0H22,AGM:conf/iccv/LiLHLLZ23} and learning rate adjustment~\cite{MSLR:conf/acl/YaoM22}, to rebalance the learning of weak and strong modalities. Other approaches including MLA~\cite{MLA:conf/cvpr/ZhangYBY24}, ReconBoost~\cite{ReconBoost:conf/icml/CongHua24} and IGM~\cite{IGM:conf/ijcai/JiangCY25} take a different path, focusing on enhancing the interaction between modalities to address the modality imbalance problem. For example, MLA~\cite{MLA:conf/cvpr/ZhangYBY24} designs an alternating algorithm to train different modalities iteratively. During the training phase, the interaction is enhanced by transferring the learning information between different modalities. ReconBoost~\cite{ReconBoost:conf/icml/CongHua24} balances modality learning by leveraging gradient boosting to capture information from other modalities during interactive learning. IGM~\cite{IGM:conf/ijcai/JiangCY25} employs a flat gradient modification strategy to enhance the interactive multimodal learning.

The aforementioned methods focus on rebalancing the learning process for weak and strong modalities while failing to explicitly facilitate the classification ability of the weak modality. In this paper, we aim to address the modality imbalance issue from facilitating the classification ability of weak modality and rebalancing the classification ability of weak and strong modalities. 

\subsection{Boosting Method}

Boosting algorithm~\cite{Adaboost:conf/eurocolt/FreundS95,GradientBoosting:journal/AS/FriedmanJ,Boost:conf/cvpr/LiuHMT14,Boost:conf/icml/CortesMS14,Boost:conf/nips/ProkhorenkovaGV18} is one of the most important algorithms in ensemble learning. The core idea of boosting is to integrate multiple learners to create a strong learner. Adaboost~\cite{Adaboost:conf/eurocolt/FreundS95}, one of the earliest boosting algorithms, adjusts the weights of incorrectly classified data points, giving more attention to the harder-to-classify examples in each iteration. Gradient boosting~\cite{GradientBoosting:journal/AS/FriedmanJ}, on the other hand, builds models in a stage-wise fashion, minimizing a loss function through gradient descent. It iteratively refines the overall model by fitting the negative gradient of the loss function~\cite{GB:journals/finr/NatekinK13} with respect to the model’s predictions.

The key advantage of boosting lies in its ability to improve model accuracy without requiring complex individual models. Therefore, boosting becomes the natural choice for improving the performance of weak classifiers.

\section{Methodology}\label{sec:AGB}

\subsection{Multimodal Learning}
For simplicity, we use two modalities, i.e., audio and video, for illustration. It is worth mentioning that our method can be easily adapted to cases with more than two modalities.

Assume that we have $N$ data points, each of which has audio and video modalities. Without loss of generality, we use $\X=\{(\x_i^a,\x^v_i)\}_{i=1}^N$ to denote the multimodal data, where $\x_i^a$ and $\x_i^v$ denote the $i$-th data point of audio and video, respectively. In addition, we are also given a category labels set $\Y=\{\;\y_i~\vert~\y_i\in\{0,1\}^K\}_{i=1}^N$, where $K$ denotes the number of category labels. Given the above training information $\X$ and $\Y$, the goal of multimodal learning is to train a model to fuse the multimodal information and predict its category label as accurately as possible.

For the sake of simplicity, we use superscript $o$ to indicate the module corresponding to a specific modality in this section, where $o\in\{a,v\}$. With the rapid growth of deep learning, representative MML approaches~\cite{MMDL:conf/icml/NgiamKKNLN11,OGR-GB:conf/cvpr/WangTF20,PMR:conf/cvpr/Fan0WW023,AGM:conf/iccv/LiLHLLZ23} have adopted deep neural network~(DNN) for multimodal learning. Following these methods, we also utilize DNN to construct our models. Specifically, we use $\phi^o(\cdot)$ to denote encoders. Then the features can be calculated by $\u^o=\phi^o(\x^o;\theta^o)$, where $\theta^o$ denotes the encoder parameters. Then, the prediction of given data can be calculated by a classifier $\psi^o(\cdot)$: $\p^o=\psi^o(\u^o;\Theta^o),$
where $\Theta^o$ denotes the parameters of the classifier. Based on $\p^o$ and its ground-truth, the objective function can be written as:
\begin{align}
\LM_{\mathrm{CE}}(\X^o,\Y)=\frac{1}{N}\sum_{i=1}^{N}\ell(\p^o_i,\y_i)=-\frac{1}{N}\sum_{i=1}^{N}\y_i^\top\log(\p^o_{i}),\label{obj:mml}
\end{align}
where $\Phi^o \triangleq \{\theta^o,\Theta^o\}$ denotes the parameters to be learned, $\X^o\triangleq \{\x^o_i\}_{i=1}^N $ and $\ell(\cdot)$ denotes the cross entropy loss.
\subsection{Sustained Boosting}
By training the model for each modality based on objective function~(\ref{obj:mml}), we can obtain multiple individual classifiers. Due to the existence of strong and weak modalities~\cite{ARM:conf/ijcai/YangYZJ15}, these classifiers exhibit different classification abilities. Hence, we can employ boosting technique~\cite{GradientBoosting:journal/AS/FriedmanJ} to improve the classification ability of weak modality. 

Concretely, assuming the classification performance of the $o$-modality requires improvement, we first apply the gradient boosting algorithm to train $n$ classifiers for the $o$-modality. Since feature extraction focuses on common patterns, we set the encoders of all classifiers to be shared. Then the $j$-th classifier can be defined as: $\Phi^o_t\triangleq \{\theta^o,\Theta^o_t\}, t\in\{1,\cdots, n\}$. In practice, we adopt multiple fully-connected layers and nonlinear activation rectified linear unit~(ReLU)~\cite{ReLU:journals/corr/abs-1803-08375} to construct our classification module. This module called the configurable classifier, is relatively independent and can be adjusted based on the classification ability. Furthermore, we adopt the shared head structure commonly used in MML~\cite{MLA:conf/cvpr/ZhangYBY24,MVIEW:conf/iccv/0065MW023,IGM:conf/ijcai/JiangCY25} to strengthen the interaction between weak and strong modalities during training.

Inspired by gradient boosting~\cite{GradientBoosting:journal/AS/FriedmanJ}, the classification ability can be facilitated through minimizing the residual error introduced by previous classifiers. Concretely, when we learn $t$-th classifier, the residual labels are defined as:
\begin{align}
\hat\y^o_{it}=\y_i-\lambda\sum_{j=1}^{t-1}\y_i\odot\p^o_{ij},\nonumber
\end{align}
where $\lambda\in[0,1]$ is used to soften hard labels~\cite{LabelSmoothing:conf/cvpr/SzegedyVISW16}, $\odot$ denotes the element-wise product, and we utilize $\y_i$ to mask non ground-truth labels to ensure the non-negativity of residual labels. Then the objective function can be defined as follows:
\begin{align}
\epsilon(\x^o_i,\y_i,t)=\ell\big(\p^o_{it},\hat\y^o_{it}\big),\label{eq:residual}
\end{align}
where $\p^o_{it}$ denotes the prediction obtained by $t$-th classifier for $i$-th data point. Since we utilize a shared encoder, the encoder will be updated when training the $t$-th classifier. Therefore, other classifiers must be updated simultaneously to prevent performance degradation. The corresponding objective can be formed as:
\begin{align}
\epsilon_{\mathrm{all}}(\x^o_i,\y_i,t)&=\ell\left(\p^o_{it}+\sum_{j=1}^{t-1}\p^o_{ij},\y_i\right)=\ell\left(\sum_{j=1}^{t}\p^o_{ij},\y_i\right).\label{eq:overall}
\end{align}

Meanwhile, we have to ensure the first $t-1$ classifiers are well-trained. Hence, we define the following objective for $t-1$ classifiers:
\begin{align}
\epsilon_{\mathrm{pre}}(\x^o_i,\y_i,t)&=\ell\left(\sum\nolimits_{j=1}^{t-1}\p^o_{ij},\y_i\right).\label{eq:previous}
\end{align}

By combining~(\ref{eq:residual}),~(\ref{eq:overall}), and~(\ref{eq:previous}), the objective can be defined as:
\begin{align}
L(\x^o_i,\y_i,t)=\epsilon(\x^o_i,\y_i,t)+\epsilon_{\mathrm{all}}(\x^o_i,\y_i,t)+\epsilon_{\mathrm{pre}}(\x^o_i,\y_i,t).\label{eq:obj}
\end{align}
Unlike traditional gradient boosting~\cite{GradientBoosting:journal/AS/FriedmanJ}, our method sustainedly minimizes classification and residual errors by optimizing~(\ref{eq:obj}).
The loss function of sustained boosting can be formed as:
\begin{align}
\LM_{\mathrm{SUB}}(\X^o,\Y,n^o;\Phi^o)=&\frac{1}{N}\sum_{i=1}^{N}L(\x^o_i,\y_i,n^o),\label{obj:gbm}
\end{align}
where $n^o$ denotes the number of classifier for $o$-modality.

\subsection{Adaptive Classifier Assignment}
 Thus far, we have defined a configurable classifier module and designed a sustained boosting in multimodal learning to enhance the classification performance of weak modality. However, recent studies~\cite{OGM:conf/cvpr/PengWD0H22} have shown that differences between modalities evolve dynamically due to imbalance issues in multimodal learning. This implies the need to design a strategy for enhancing classification ability that adapts to dynamic changes. Hence, we propose an adaptive classifier assignment strategy to adjust the number of the weak classifier. For simplicity, we redefine the modality classifiers as: $\Phi^o_t\triangleq \{\theta^o,\Theta^o_t\}, t\in\{1,\cdots, n^o\}$, where $n^o$ is the parameter to be updated. 

Then, we utilize confident score to monitor the learning status. At $t$-th iteration, confident score can be calculated by:
\begin{align}
\forall o\in\{a,v\},s^o_t=\frac{1}{N}\sum\nolimits_{i=1}^N\y_i^\top\left[\sum\nolimits_{j=1}^{n^o}\p^o_{ij}\right].\nonumber
\end{align}
The confident score reflects the classification ability of the models. Hence, if $s^a_t-\sigma s^v_t>\tau$, we assign a new configurable classifier for video modality at this iteration, where $\sigma\ge 1$ is the coefficient. $\tau$ is the dead zone for fault tolerance. On the contrary, we also assign a new configurable classifier for audio modality if $s^a_t-\sigma s^v_t<\tau$. 

Our algorithm is summarized in Algorithm~(\ref{algo:ours}). In practice, we perform adaptive classification assignment strategy to determine if we need to adjust the classification ability every $t_N$ iterations.





\begin{algorithm}[t]
\caption{Learning algorithm of our proposed method.}\label{algo:ours}
\begin{algorithmic}[1]
\Require{Training data $\X$, category labels $\Y$.}
\Ensure{The learned DNN models for all modalities.}\\
\textbf{INIT} Initialize the number of classifier $n^a=1$, $n^v=1$. Initialize iteration $t=1$. Initialize DNN parameters $\Phi^a_t$ and $\Phi^v_t$.
\For{$t=1\mapsto\#iterations$}
\State Sample a mini-batch $\X_t=\{(\x_i^a,\x_i^v)\}_{i=1}^{n_b}$; \Comment{{\color{blue}Learn MML models.}}
\State $\forall \x_i^a,\x_i^v\in\X_t$, calculate features $\u_i^a$ and $\u_i^v$;
\State Calculate predictions $\{\p^a_{ij}\}_{j=1}^{n^a}$ and $\{\p^v_{ij}\}_{j=1}^{n^v}$.
\State Calculate loss in~(\ref{obj:gbm}) based on predictions;
\State Update DNN parameters $\Phi^a_t$ and $\Phi^v_t$ based on SGD;
\If{$\mod(t,t_N)=0$}\Comment{{\color{blue}Adaptive Classification assignment strategy.}}
\State Calculate confident score $\{s^a_t,s^v_t\}$ based on predictions;
\If{$s^a_t-\sigma s^v_t>\tau$}
\State Add a classifier for audio modality;
\State $n^a=n^a+1$;
\ElsIf{$s^a_t-\sigma s^v_t<\tau$}
\State Add a classifier for video modality;
\State $n^v=n^v+1$;
\EndIf
\EndIf
\EndFor
\end{algorithmic}
\end{algorithm}

\subsection{Theoretical Analysis}
The sustained boosting algorithm is introduced to reduce the loss gap between different modalities. In this section, we theoretically analyze its effect on minimizing this gap. We first define the gap function as follows:
\begin{align}
\GM(\Phi)=\LM^a(\Phi^a)-\LM^v(\Phi^v),
\end{align}
where $\Phi^a$ and $\Phi^v$ respectively denote the parameters of audio and video modality, $\LM^a$ and $\LM^v$ denote the overall loss function $\LM_{\mathrm{SUB}}$ for audio and video, respectively.Without loss of generality, we assume that $\LM^a>\LM^v$; the case where $\LM^a<\LM^v$ can be analyzed analogously.

We derive the following conclusions:
\begin{theorem}[Convergence of Gap Loss, Informal]
\label{thm:convergence}
Under some assumptions for the loss function and the effectiveness of sustained boosting algorithm, we have:
\begin{equation}
\GM(\Phi(T))\le\frac{1}{1+\frac{\nu^2\kappa^2}{2L_a\beta^2}T\GM(\Phi(0))}\GM(\Phi(0))
\end{equation}
where $\nu$, $\kappa$, $L_a$ and $\beta$ are constant.
\end{theorem}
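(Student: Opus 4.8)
The plan is to compress the per-iteration behaviour of Algorithm~\ref{algo:ours} into a single \emph{self-improving} scalar recursion for the gap and then close with an elementary inequality. Write $G_t \triangleq \GM(\Phi(t)) = \LM^a(\Phi^a(t)) - \LM^v(\Phi^v(t)) > 0$; by the standing assumption $\LM^a > \LM^v$ the audio branch is the weak modality, so it is the one on which the sustained-boosting update (the freshly assigned configurable classifier) acts. The key claim I would establish is the one-step quadratic decrease
\begin{equation}
G_{t+1} \;\le\; G_t - \frac{\nu^2\kappa^2}{2L_a\beta^2}\,G_t^2 .
\end{equation}
Granting this, the theorem is immediate. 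Put $c \triangleq \nu^2\kappa^2/(2L_a\beta^2)$. The sequence $(G_t)$ is nonnegative and nonincreasing (the latter as soon as $cG_0\le 1$, which is the operating regime), so dividing the recursion by $G_tG_{t+1}>0$ and using $G_t/G_{t+1}\ge 1$ gives $1/G_{t+1} \ge 1/G_t + c$; telescoping over $t = 0,\dots,T-1$ yields $1/G_T \ge 1/G_0 + cT$, i.e. $G_T \le G_0/(1+cTG_0)$, which is exactly the stated bound.

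It remains to prove the one-step claim, and this is where the hypotheses folded into ``some assumptions for the loss function and the effectiveness of sustained boosting'' are spent. I would proceed in three steps. (i) \emph{Descent on the weak branch.} Apply the $L_a$-smoothness descent lemma to $\LM^a$ along the boosting update on the parameters of the newly added classifier; with step size $\nu$ and the update direction normalised by a feature/gradient bound $\beta$, this yields $\LM^a(\Phi^a(t+1)) \le \LM^a(\Phi^a(t)) - \frac{\nu^2}{2L_a\beta^2}\|g_t\|^2$, where $g_t$ is the gradient of $\LM^a$ with respect to the new classifier's weights (the $2L_a$ is the usual smoothness constant, the $\beta^2$ the normalisation). (ii) \emph{Edge / effectiveness condition.} Interpret ``effectiveness of sustained boosting'' as the guarantee that the residual classifier fitted to $\hat\y^a_{it}$ in~(\ref{eq:residual}) correlates with the current residual at level at least $\kappa$, which in the operating regime of the cross-entropy in~(\ref{obj:mml}) translates into $\|g_t\|^2 \ge \kappa^2 G_t^2$ — the slope available to the weak branch is at least proportional to how far its loss sits above the strong branch; one derives this by combining convexity of $\ell$ in the logits with the definition of the residual labels. (iii) \emph{Stability of the strong branch.} Since the update modifies only the weak-modality classifier and the joint objective~(\ref{eq:obj}) keeps the shared encoder from degrading the already-trained heads, $\LM^v(\Phi^v(t+1)) \le \LM^v(\Phi^v(t))$ up to a term of order $o(G_t^2)$ absorbed into the constant. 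Chaining (i)--(iii): $G_{t+1} = \LM^a(t+1) - \LM^v(t+1) \le G_t - \frac{\nu^2}{2L_a\beta^2}\|g_t\|^2 \le G_t - \frac{\nu^2\kappa^2}{2L_a\beta^2}G_t^2$.

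The main obstacle is steps (ii) and (iii). For (ii), turning the qualitative ``boosting works'' into the clean quadratic lower bound $\|g_t\|^2 \ge \kappa^2 G_t^2$ requires isolating the gradient with respect to the last classifier's parameters, lower-bounding the directional decrease by the residual mass via convexity of $\ell$, and then arguing that this residual mass is itself at least a constant fraction of $\LM^a - \LM^v$; all the messy constants end up inside $\kappa$ and $\beta$, and the fact that the rate is $1/T$ rather than geometric is precisely the signature of this only-quadratic (not strongly convex) slope. For (iii), the coupling through the \emph{shared} encoder is the genuinely non-routine point: a step taken to help audio also perturbs the video logits, so one must either posit directly (as part of the loss-function assumptions) that the weak-branch step is $\LM^v$-nonincreasing, or bound $|\LM^v(t+1)-\LM^v(t)|$ by the encoder displacement times a Lipschitz constant and verify it is dominated by the quadratic gain $\frac{\nu^2\kappa^2}{2L_a\beta^2}G_t^2$. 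I expect the formal version of the theorem to dispatch this coupling via an explicit assumption, leaving the remainder the elementary telescoping argument of the first paragraph.
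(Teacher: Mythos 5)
Your proposal follows essentially the same route as the paper's proof: a one-step quadratic decrease $\GM(\Phi(t+1))\le\GM(\Phi(t))-\frac{\nu^2\kappa^2}{2L_a\beta^2}\GM(\Phi(t))^2$ obtained from the smoothness descent lemma, the effectiveness conditions $\langle\nabla\LM^a(\Phi(t)),h(\Phi(t))\rangle\le-\nu\|\nabla\LM^a(\Phi(t))\|^2$ and $\|h(\Phi(t))\|\le\beta\|\nabla\LM^a(\Phi(t))\|$ with step size $\eta=\nu/(L_a\beta^2)$, a gradient-versus-gap lower bound $\|\nabla\LM^a(\Phi(t))\|\ge\kappa|\GM(\Phi(t))|$, the assertion that $\LM^v$ is unchanged by the weak-branch update, and the reciprocal telescoping at the end. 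The only divergences are in justification rather than structure: the paper derives the $\kappa$-bound from strong convexity (a PL-type inequality) together with boundedness of the gap and the assumption that both modalities attain the same optimal loss value, rather than from the residual-correlation argument you sketch in step (ii), and it dispatches the shared-encoder coupling you rightly flag in step (iii) by simply assuming the modality parameters are independent so that $\LM^v(\Phi(t+1))=\LM^v(\Phi(t))$.
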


The results in Theorem~\ref{thm:convergence} indicate that, by employing the gradient boosting algorithm, the loss gap between modalities converges at a rate of $\OM(1/T)$. The proof can be found in the appendix.

\section{Experiments}
\subsection{Experimental Setup} \label{sec:exp}

{\bf Dataset:} We carry out the experiments on six extensive multimodal datasets, i.e., \CREMAD~\cite{CREMAD:journals/taffco/CaoCKGNV14}, \Kinetics~\cite{Kinetics-Sound:conf/iccv/ArandjelovicZ17}, \NVGesture~\cite{NVGeasture:conf/cvpr/MolchanovYGKTK16}, \VGGSound~\cite{VGGSound:conf/icassp/ChenXVZ20}, \Twitter~\cite{Twitter15:conf/ijcai/Yu019}, and \Sarcasm~\cite{Sarcasm:conf/acl/CaiCW19}~datasets. The \CREMAD, \Kinetics, and \VGGSound~datasets consist of audio and video modalities. \NVGesture~dataset contains three modalities, i.e., RGB, optical flow~(OF), and Depth. \Twitter~and \Sarcasm~datasets consist of image and text modalities. 

To ensure a fair comparison, we strictly follow the data partitioning strategy adopted by representative methods~\cite{OGR-GB:conf/cvpr/WangTF20,OGM:conf/cvpr/PengWD0H22,MLA:conf/cvpr/ZhangYBY24,ReconBoost:conf/icml/CongHua24}. Specifically, the \CREMAD~dataset contains 7,442 clips, which are divided into training set with 6,698 samples and testing set with 744 samples. For \Kinetics~dataset, which contains 19,000 video clips, is divided into training set with 15,000 clips, validation set with 1,900 clips, and testing set with 1,900 clips. \VGGSound~dataset includes 168,618 videos for training and validation, and 13,954 videos for testing. The \NVGesture~dataset is divided into 1,050 samples for training and 482 samples for testing. \Twitter~dataset is divided into training set with 3,197 pairs, validation set with 1,122 pairs and testing set with 1,037 pairs. \Sarcasm~dataset includes 19,816 pairs for the training set, 2,410 pairs for the validation set, and 2,409 pairs for the testing set. More details are provided in the appendix.

{\bf Baselines:} We select two categories of methods for comparison, i.e., traditional multimodal fusion methods and rebalanced multimodal learning methods. In detail, traditional multimodal fusion methods include concatenation fusion~(Concat), affine transformation fusion~(Affine)~\cite{Film:conf/aaai/PerezSVDC18}, multi-layers lstm fusion~(ML-LSTM)~\cite{ML-LSTM:journals/mta/NieYSW21}, prediction summation fusion~(Sum)~\cite{MML:conf/ijcai/YangWZX019}, and prediction weighting fusion~(Weight)~\cite{MML:conf/ijcai/YangWZX019}. Rebalanced multimodal learning methods include MSES~\cite{MSES:conf/acpr/FujimoriEKM19}, G-blend~\cite{OGR-GB:conf/cvpr/WangTF20}, MSLR~\cite{MSLR:conf/acl/YaoM22}, OGM~\cite{OGM:conf/cvpr/PengWD0H22}, PMR~\cite{PMR:conf/cvpr/Fan0WW023}, AGM~\cite{AGM:conf/iccv/LiLHLLZ23}, MMParato~\cite{MMPareto:conf/icml/WeiH24}, SMV~\cite{SMV:conf/cvpr/YakeRZD24}, MLA~\cite{MLA:conf/cvpr/ZhangYBY24}, DI-MML~\cite{DI-MML:conf/mm/FanXWLG24}, LFM~\cite{LFM:conf/nips/0074WJ024}, ReconBoost~\cite{ReconBoost:conf/icml/CongHua24}.


{\bf Evaluation Protocols:} Following the setting of MLA~\cite{MLA:conf/cvpr/ZhangYBY24} and ReconBoost~\cite{ReconBoost:conf/icml/CongHua24}, we adopt accuracy, mean average precision~(MAP) and MacroF1 as evaluation metrics. The accuracy measures the proportion of correct predictions of total predictions. MAP returns the average precision of all samples. And MacroF1 calculates the average F1 across all categories.

{\bf Implementation Details: }Following OGM~\cite{OGM:conf/cvpr/PengWD0H22}, we employ ResNet18~\cite{ResNet:conf/cvpr/HeZRS16} as the backbone to encode audio and video for \CREMAD, \Kinetics~and \VGGSound~datasets. All the parameters of the backbone are randomly initialized. For \NVGesture~dataset, we employ the I3D~\cite{I3D:conf/cvpr/CarreiraZ17} as unimodal branch following the setting of~\cite{nvGesture:conf/icml/WuJCG22}. We initialize the encoder with the pre-trained model trained on ImageNet. For the architecture of the configurable classifier, we explore a two-layer network, which can be denoted as ``Layer1$(D\times 256) \mapsto $ ReLU $ \mapsto $ Layer2 $(256\times K)$''. Here, $D$ denotes the output dimensions of encoders, ``Layer1''/``Layer2'' are fully connected layer, and ``ReLU'' denotes the ReLU~\cite{ReLU:journals/corr/abs-1803-08375} activation layer. Furthermore, the Layer2 is utilized as shared head for all modalities as described in Section~\ref{sec:AGB}. Both Layer1 and Layer2 are randomly initialized. In addition, all hyper-parameters are selected by using the cross-validation strategy. Specifically, we use stochastic gradient descent~(SGD) as the optimizer with a momentum of $0.9$ and weight decay of $1\times 10^{-4}$. The initial learning rate is set to be $1\times 10^{-2}$ for \CREMAD, \Kinetics, \VGGSound~, and \NVGesture~datasets. During training, the learning rate is progressively reduced by a factor of ten upon observing loss saturates. The batch size is set to be 64 for \CREMAD~and \Kinetics~datasets, 16 for \VGGSound~dataset, and 2 for \NVGesture~dataset. We set the iteration $t_N$ for checking whether to assign the classifier to 20 epochs for \CREMAD, 5 for \Twitter, 1 for \Sarcasm, and 10 for \VGGSound, \Kinetics, \NVGesture~datasets. For all datasets, we search $\lambda$ in $\left \{0.1, 0.2, 0.33, 0.5, 1.0 \right \}$. For all datasets, $\sigma$ and $\tau$ are set to be $1.0$ and $0.01$, respectively. For \Twitter~ and \Sarcasm~dataset, following~\cite{Twitter15:conf/ijcai/Yu019,Sarcasm:conf/acl/CaiCW19}, we adopt BERT~\cite{BERT:conf/naacl/DevlinCLT19} as the text encoder and ResNet50~\cite{ResNet:conf/cvpr/HeZRS16} as the image encoder. We use Adam~\cite{Adam:journals/corr/KingmaB14} as the optimizer, with an initial learning rate of $2\times 10^{-5}$. The batch size is set to 32 for \Twitter~and \Sarcasm~datasets. The other parameter settings are the same as audio-video datasets. For comparison methods, the source codes of all baselines are kindly provided by their authors. For fair comparison, all baselines also adopt the same backbone and initialization strategy for the experiment. All experiments are conducted on an NVIDIA GeForce RTX 4090 and all models are implemented with pytorch.


\begin{table*}[t]
\centering
\caption{Classification performance comparison with SOTA baselines. The best and second-best results are highlighted in {\bf bold} and {\underline{underline}}, respectively. The results with gray background denote the performance based on multimodal learning is inferior to that of the best unimodal approach.}
\label{tab:main_results}
\renewcommand\arraystretch{1.3}
\resizebox{\textwidth}{!}{
\begin{tabular}{l|cc|cc|cc|cc|cc|cc}
\toprule
\multirow{2}{*}{\textbf{Method}} &\multicolumn{2}{c|}{\CREMAD} &\multicolumn{2}{c|}{\Kinetics} &\multicolumn{2}{c|}{\VGGSound} &\multicolumn{2}{c|}{\Twitter} &\multicolumn{2}{c|}{\Sarcasm}&\multicolumn{2}{c}{\NVGesture}\\
\cmidrule(lr){2-3} \cmidrule(lr){4-5} \cmidrule(lr){6-7} \cmidrule(lr){8-9} \cmidrule(lr){10-11}\cmidrule(lr){12-13}
& Acc. & MAP    & Acc. & MAP    & Acc. & MAP    & Acc. & F1    & Acc. & F1    & Acc. & F1    \\
\midrule
Unimodal-1     & .4583  & .5879 & .5412  & .5669 & .4655 & .4701 & .5863 & .4333 & .7181 & .7073 & .7822 & .7833 \\
Unimodal-2     & .6317  & .6861 & .5562  & .5837 & .3494 & .3478 & .7367 & .6849 & .8136 & .8056 & .7863 & .7865 \\
Unimodal-3     & N/A    & N/A   & N/A    & N/A   & N/A   & N/A   & N/A   & N/A   & N/A   & N/A   & .8154 & .8183 \\\midrule
Concat         & .6361  & \unimodal{.6841} & .6455  & .7130 & .5116 & .5352 & \unimodal{.7011} & .6386 & .8286 & .8240 & .8237 & .8270 \\
Affine         & .6626  & .7193 & .6424  & .6931 & .5001 & .5155 & \unimodal{.7203} & \unimodal{.5992} & .8240 & .8188 & .8278 & .8281 \\
ML-LSTM        & \unimodal{.6290}  & \unimodal{.6473} & .6394  & .6902 & .4966 & .5139 & \unimodal{.7068} & \unimodal{.6564} & .8277 & .8205 & .8320 & .8330 \\
Sum            & .6344  & .6908 & .6490  & .7103 & .5136 & .5338 & \unimodal{.7300} & \unimodal{.6661} & .8294 & .8247 & \unimodal{.8050} & \unimodal{.8067} \\
Weight         & .6653  & .7134 & .6533  & .7110 & .5144 & .5300 & \unimodal{.7242} & \unimodal{.6516} & .8265 & .8219 & \unimodal{.7842} & \unimodal{.7939} \\\midrule
MSES           & .6546  & .7138 & .6591  & .7196 & .4891 & .5429 & \unimodal{.7252} & \unimodal{.6439} & .8423 & .8369 & \unimodal{.8112} & \unimodal{.8147} \\
G-blend        & .6465  & .7392 & .6722  & .7274 & .5086 & .5555 & \unimodal{.7309} & \unimodal{.6799} & .8286 & .8215 & .8299 & .8305 \\
MSLR           & .6868  & .7412 & .6756  & .7282 & .4987 & .5415 & \unimodal{.7232} & \unimodal{.6382} & .8439 & .8378 & .8237 & .8284 \\
OGM            & .6612  & .7372 & .6582  & .7159 & .4829 & .4978 & \unimodal{.7058} & \unimodal{.6435} & .8360 & .8293 & $-$   & $-$   \\
PMR            & .6659  & .7058 & .6675  & .7274 & .4647 & .4866 & \unimodal{.7357} & \unimodal{.6636} & .8310 & .8256 & $-$   & $-$   \\
AGM            & .6733  & .7807 & .6791  & .7388 & .4711 & .5198 & \unimodal{.7261} & \unimodal{.6502} & .8306 & .8293 & .8279 & .8284 \\
MMParato       & .7487  & .8535 & .7000  & .7850 & .5125 & .5473 & \unimodal{.7358} & \unimodal{.6729} & .8348 & .8284 & .8382 & .8424 \\
SMV            & .7872  & .8417 & .6900  & .7426 & .5031 & .5362 & .7428 & \unimodal{.6817} & .8418 & .8368 & .8352 & .8341 \\
MLA            & .7943  & .8572 & .7004  & \underline{.7945} & .5165 & .5473 & \unimodal{.7352} & \unimodal{.6713} & .8426 & .8348 & .8340 & .8372 \\
DI-MML         & .8158  & .8592 & .7203  & .7426 & .5173 & .5479 & \unimodal{.7248} & \unimodal{.6686} & .8411 & .8315 & $-$   & $-$   \\
LFM            & \underline{.8362}  & \underline{.9006} & \underline{.7253}  & .7897 & \underline{.5274} & \underline{.5694} & \underline{.7501} & \bf.7057 & \underline{.8497} & \underline{.8457} & \underline{.8436} & \underline{.8468} \\
ReconBoost     & .7557  & .8140 & .6855  & .7662 & .5097 & .5387 & .7442 & \unimodal{.6832} & .8437 & .8317 & .8386 & .8434\\\midrule
Ours            & \makecell{\bf.8515 \\ \std{0.0027}}  & \makecell{\bf.9103 \\ \std{0.0014}} & \makecell{\bf.7263 \\ \std{0.0031}}  & \makecell{\bf.7901 \\ \std{0.0063}} & \makecell{\bf.5301 \\ \std{0.0006}} & \makecell{\bf.5826 \\ \std{0.0017}} & \makecell{\bf.7512 \\ \std{0.0068}}   & \makecell{\underline{.6962} \\ \std{0.0016}} & \makecell{\bf.8510 \\ \std{0.0054}} & \makecell{\bf.8458 \\ \std{0.0047}} & \makecell{\bf.8501 \\ \std{0.0025}} & \makecell{\bf.8533 \\ \std{0.0031}}\\
\bottomrule
\end{tabular}
}
\end{table*}

\subsection{Main Results}

{\bf Classification Performance Comparison:} The classification results on all datasets are reported in Table~\ref{tab:main_results}, where ``$-$'' denotes that corresponding methods cannot applied to the dataset with more than two modalities. And Unimodal-1/2/3 is used to denote the results based on unimodal. Unimodal-1/2 respectively denote the video/audio for \CREMAD~and \Kinetics, and text/image for \Twitter~and \Sarcasm. Unimodal-1/2/3 denotes the RGB/OF/Depth modality for \NVGesture~dataset, respectively. Furthermore, the results with a gray background indicate that the performance based on multimodal learning is inferior to that of the best unimodal approach. We can draw the following observations: (1). Compared with unimodal baselines, traditional multimodal fusion methods and rebalanced multimodal learning methods can achieve better performance in almost all cases; (2). Our method can outperform existing SOTA baselines to achieve the best accuracy in all cases for multimodal situations; (2). The accuracy on \NVGesture~dataset demonstrates that our method can extend to the case with more than two modalities and achieve the best performance.

\subsection{Sensitivity to Hyperparameter}
We explore the influence of $\sigma$ and $\lambda$ on \CREMAD~and \Kinetics~datasets in this section. More results can be found in appendix.

{\bf Sensitivity to Threshold $\sigma$: }We study the influence of threshold $\sigma$ on \CREMAD~and \Kinetics~datasets. The accuracy with different $\sigma\in[1,1.75]$ is shown in Figure~\ref{fig:sensitivity}. We can find that our method is not sensitive to threshold $\sigma$ in a large range.

{\bf Sensitivity to Smoothing Factor $\lambda$: }We explore the influence of smoothing factor $\lambda$ on \CREMAD~and \Kinetics~datasets. The accuracy with different $\lambda\in[0.1,1]$ is reported in Figure~\ref{fig:sensitivity}. We can find that our method is not sensitive to hyper-parameter smoothing factor $\lambda$ in a large range.

\begin{figure*}[t] 
\begin{minipage}[t]{0.55\linewidth}\centering
\includegraphics[width=.465\linewidth]{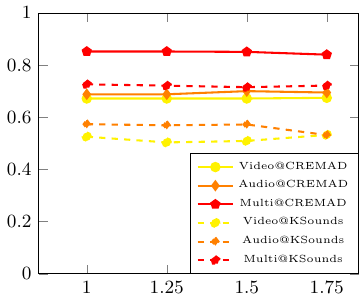}
\includegraphics[width=.465\linewidth]{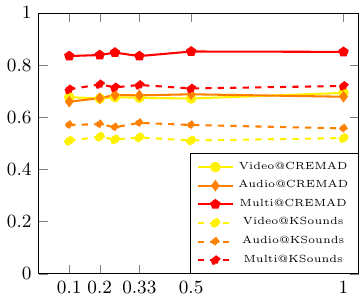} \\
\caption{Sensitivity to $\sigma$~(left) and $\lambda$~(right).}
\label{fig:sensitivity}
\end{minipage}
\hfill
\begin{minipage}[t]{0.45\linewidth}\centering
\includegraphics[width=.95\linewidth]{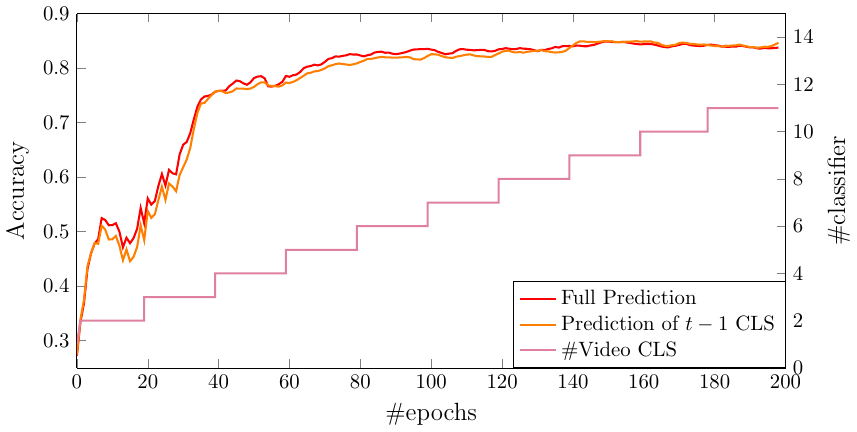} \\
\caption{Performance comparison.}
\label{fig:residual}
\end{minipage}
\end{figure*}

\subsection{Ablation Study}
We investigate the effectiveness of our method by analyzing the influence of the key components of our objectives in Equation~(\ref{eq:residual}),~(\ref{eq:overall}), and~(\ref{eq:previous}), respectively denoted as $\epsilon$, $\epsilon_o$, and $\epsilon_p$. The accuracy results on \CREMAD~and \Kinetics~datasets are reported in Table~\ref{tab:ablation}. From Table~\ref{tab:ablation}, we can find that: (1). Both objectives in Equation~(\ref{eq:residual}),~(\ref{eq:overall}), and~(\ref{eq:previous}) can boost multimodal performance; (2). While the unimodal performance of the method using all objectives may not always reach the highest level, it achieves a more balanced classification performance across modalities. More results are reported in appendix.

We further investigate the impact of residual learning on classification performance by comparing the performance of all $t$ classifiers with that of the first $t-1$ classifiers during the training. The results are presented in Figure~\ref{fig:residual}, where the former accuracy is denoted as ``Full Prediction'' and the latter is denoted as ``Prediction of $t-1$ CLS''. In Figure~\ref{fig:residual}, we also present the number of the video classifier. We observe that the number of classifiers for the video modality has increased, and the performance of all $t$ classifiers is generally superior to that of the first $t-1$ classifiers. This performance gain arises from our learning of the residual objective.

\begin{table}[t]
\centering
\begin{tabular}{cc}
\begin{minipage}{.515\linewidth}
\caption{The accuracy~(Multi/Audio/Video) for ablation study on \CREMAD~and \Kinetics~datasets.}
\label{tab:ablation}
\centering
\renewcommand\arraystretch{1.3}
\resizebox{\textwidth}{!}{
\begin{tabular}{ccc|c|c}
\toprule
$\epsilon$&$\epsilon_{o}$&$\epsilon_{p}$& \CREMAD  & \Kinetics   \\\midrule
\tikzcmark  &\tikzxmark   &\tikzcmark& 0.8333/0.6465/0.6734            & 0.6942/{\bf0.5303}/0.5044 \\
\tikzxmark  &\tikzcmark   &\tikzxmark& 0.8320/0.6573/0.6707            & 0.7190/{\bf0.5303}/0.5145\\
\tikzxmark  &\tikzcmark   &\tikzcmark& 0.8360/{\bf 0.6841}/0.6371      & 0.7198/0.5261/0.5238\\
\tikzcmark  &\tikzcmark   &\tikzcmark& {\bf0.8515}/0.6835/{\bf 0.6828} & {\bf0.7263}/0.5257/{\bf0.5736}\\
\bottomrule
\end{tabular}
}
\end{minipage} &
\begin{minipage}{.415\linewidth}
\caption{The impact of weak classifier selection strategy.}  
\label{tab:selection-strategy}
\centering
\renewcommand\arraystretch{1.3}
\resizebox{\textwidth}{!}{
\begin{tabular}{r|cc|ccc}
\toprule
\multirow{2}{*}{Strategy} &\multicolumn{2}{c|}{\#Classifier} &\multicolumn{3}{c}{{Accuracy}}  \\\cmidrule(lr){2-6}
&          Audio&Video& Multi&Audio   & Video  \\\midrule
Fixed      & 1&10 & {0.8091} & {0.6774} & {0.6156} \\
Fixed      & 1&12 & {0.8118} & {0.6519} & {0.6277} \\\midrule
Adaptive   & 1&10 & \bf{0.8515} & \bf{0.6835} & \bf{0.6828} \\
\bottomrule
\end{tabular}
}
\end{minipage} 
\end{tabular}
\end{table}

\subsection{Further Analysis}\label{sec:further-analysis}
{\bf Impact of Weak Classifier Assignment Strategy:} We conduct an experiment to study the influence of adaptive classifier assignment strategy. Specifically, we design a fixed classifier assignment strategy for comparison. This approach allocates $n^{(\text{fix})}$ classifiers for weak modality during the init stage. And we no longer dynamically adjust the number of classifiers during training for weak modality. 

\begin{figure*}[t] 
\centering
\begin{minipage}{.32\linewidth}
\centering
\includegraphics[width=\linewidth]{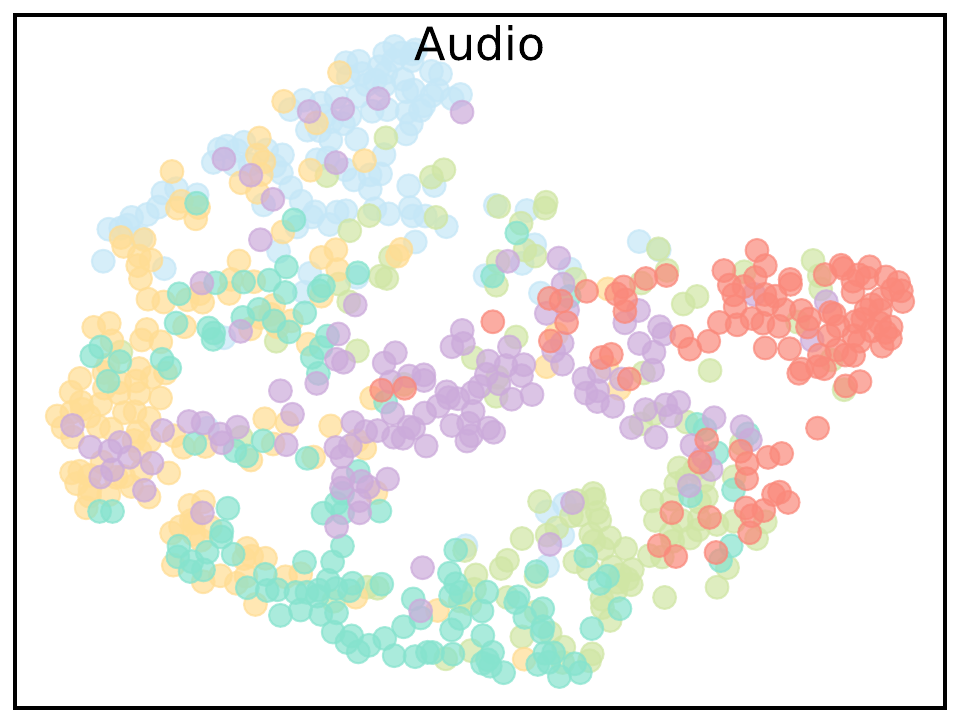}\\
{(a). naive MML@Audio.}
\end{minipage} 
\begin{minipage}{.32\linewidth}
\centering
\includegraphics[width=\linewidth]{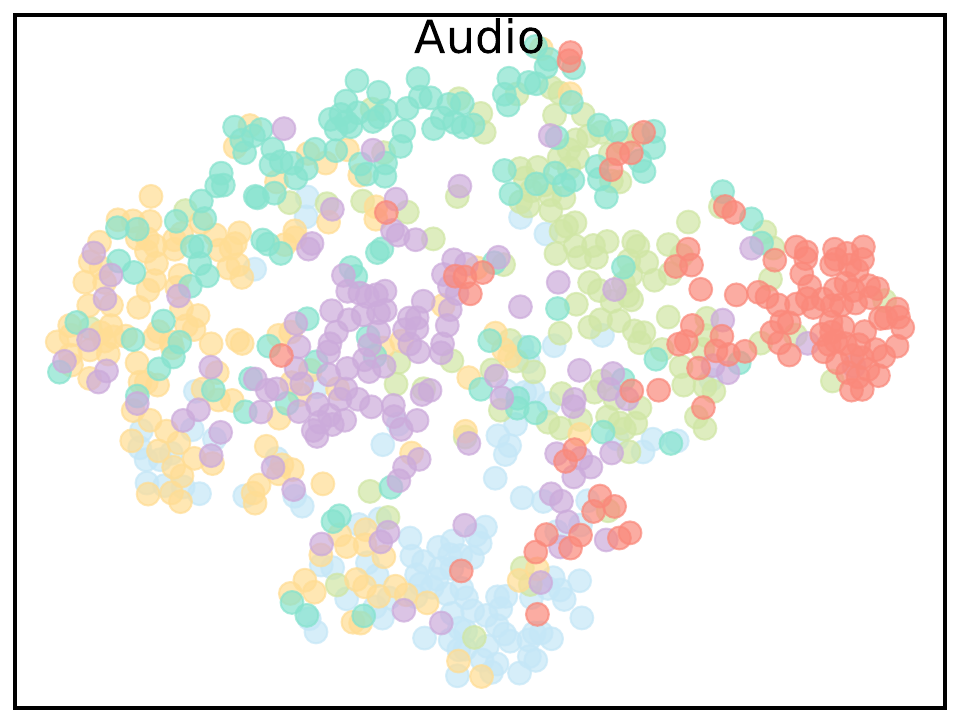}\\
{(b). ReconBoost@Audio.}
\end{minipage}
\begin{minipage}{.32\linewidth}
\centering
\includegraphics[width=\linewidth]{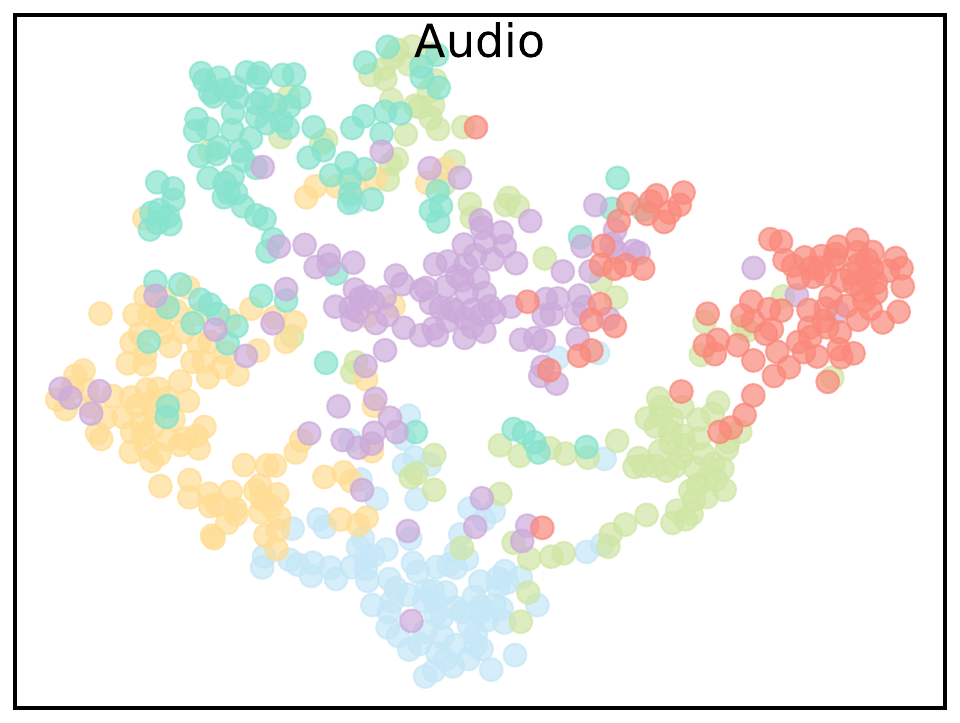}\\
{(c). Ours@Audio.}
\end{minipage}\\
\begin{minipage}{.32\linewidth}
\centering
\includegraphics[width=\linewidth]{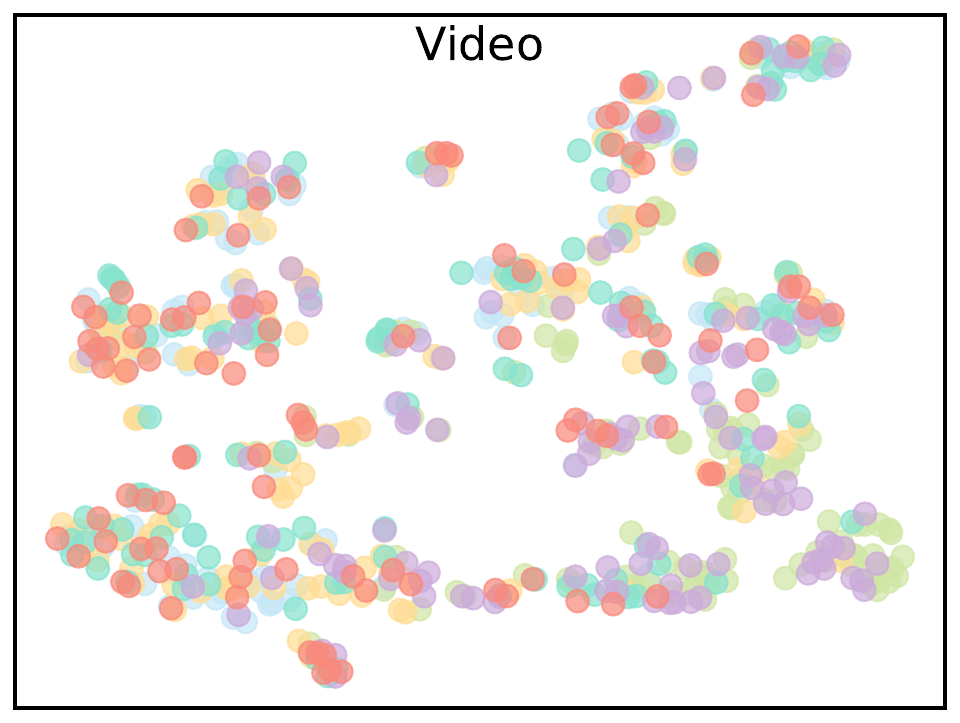}\\
{(d). naive MML@Video.}
\end{minipage} 
\begin{minipage}{.32\linewidth}
\centering
\includegraphics[width=\linewidth]{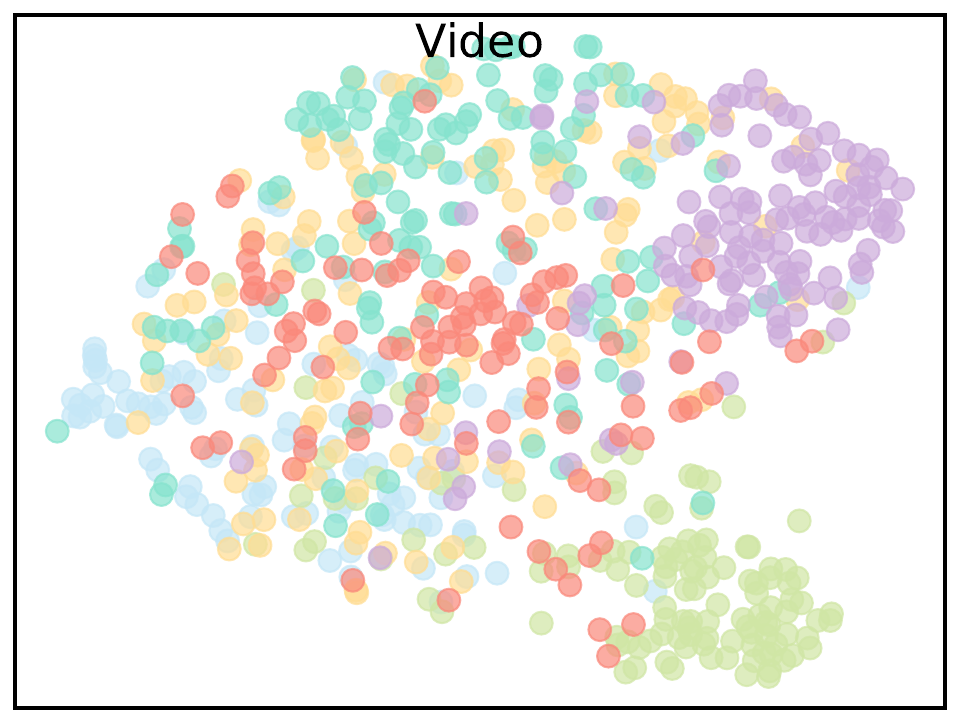}\\
{(e). ReconBoost@Video.}
\end{minipage}
\begin{minipage}{.32\linewidth}
\centering
\includegraphics[width=\linewidth]{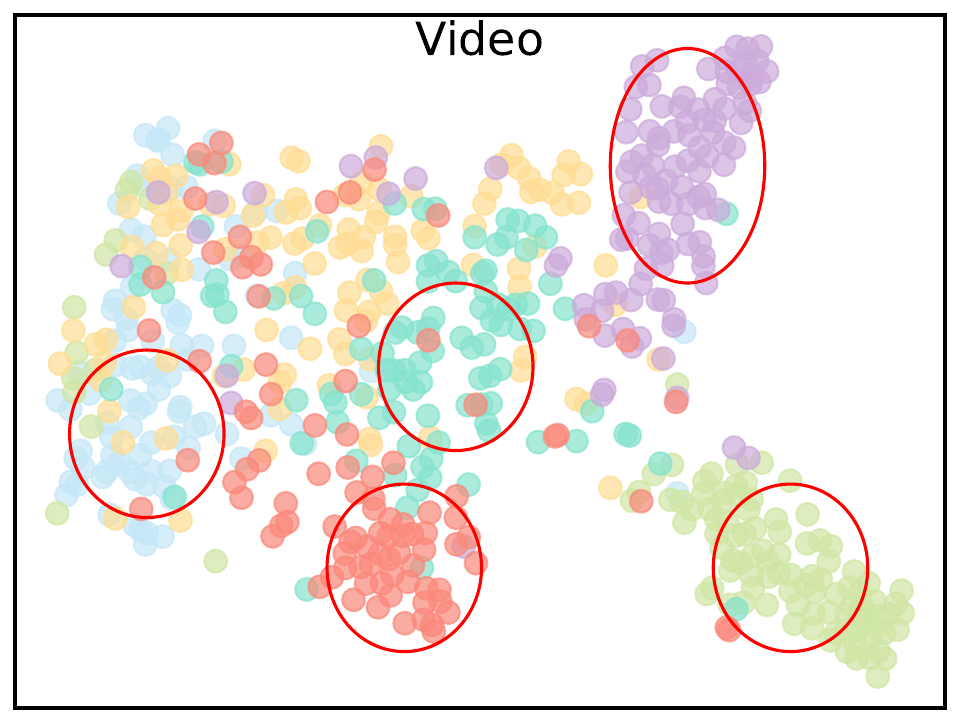}\\
{(f). Ours@Video.}
\end{minipage}
\caption{Visualization on \CREMAD~dataset. The video visualization highlights the need to improve weak modality classification.}
\label{fig:visualization}
\end{figure*}
  
The results on \CREMAD~dataset are reported in Table~\ref{tab:selection-strategy}, where $n^{(\text{fix})}$ is set to be 10 and 12. The results in Table~\ref{tab:selection-strategy} demonstrate that our proposed adaptive classifier assignment strategy can boost performance compared with fixed classifier strategy. This is because our method dynamically adjusts modality classification performance in response to modality imbalance during training.

{\bf Visualization Results:} We further study the property of embeddings through visualization. Specifically, we illustrate the t-SNE~\cite{tSNE:journal/jmlr/MaatenH08} results on \CREMAD~dataset for naive multimodal learning~(naive MML), ReconBoost~\cite{ReconBoost:conf/icml/CongHua24}, and our method in Figure~\ref{fig:visualization}. From Figure~\ref{fig:visualization}, we can find that: (1). Compared to naive MML, our method and ReconBoost can learn more discriminative multimodal features, as both approaches enhance the weak modality using information from the strong modality; (2). Compared to ReconBoost, our method demonstrates significantly superior classification performance on the video modality, with several distinct categories highlighted by circle markers in Figure~\ref{fig:visualization}~(f). This improvement is primarily attributed to our explicit enhancement of the classification capabilities of the weaker modality.

\begin{figure*}[t] 
\begin{minipage}[t]{0.33\linewidth}\centering
\includegraphics[width=\linewidth]{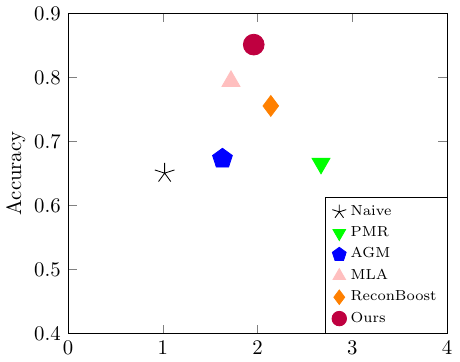} \\
\caption{Training time (hrs).}
\label{fig:train-time}
\end{minipage}
\begin{minipage}[t]{0.33\linewidth}\centering
\includegraphics[width=\linewidth]{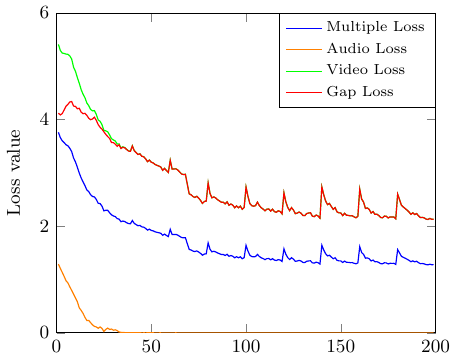} \\
\caption{Loss visualization.}
\label{fig:loss}
\end{minipage}
\begin{minipage}[t]{0.33\linewidth}\centering
\includegraphics[width=\linewidth]{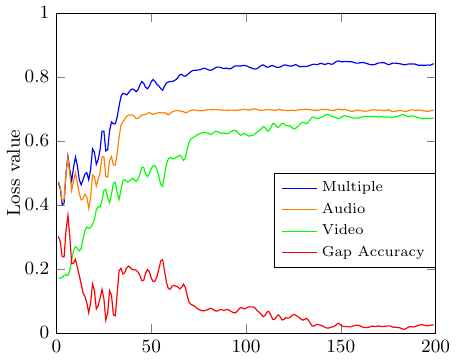} \\
\caption{Accuracy change.}
\label{fig:accuracy}
\end{minipage}
\end{figure*}

\begin{table}[t]
\centering
\begin{tabular}{cc}
\begin{minipage}{.425\linewidth}
\caption{The impact of model capacity.}  
\label{tab:model-capacity}
\centering
\renewcommand\arraystretch{1.3}
\resizebox{\textwidth}{!}{
\begin{tabular}{r|cc|cc|c}
\toprule
\multirow{2}{*}{Method} &\multicolumn{2}{c|}{Arch.} &\multicolumn{2}{c|}{\#params.} &\multirow{2}{*}{Acc.} \\\cmidrule{2-3}\cmidrule{4-5}
&  Audio        &Video  &  Audio        &Video  &\\\midrule
Unimodal    & -  & R18 & - & 11.8M & 0.4718\\
Unimodal    & -  & R34 & - & 23.3M & 0.4731\\\midrule
Naive    & R18  & R18 & 11.8M & 11.8M & \second{0.6507}\\
Naive    & R18  & R34 & 11.8M & 23.3M & 0.6277\\\midrule
Ours     & R18  & R18 & 11.8M & 12.8M & \first{0.8515}\\
\bottomrule
\end{tabular}
}
\end{minipage} &
\begin{minipage}{.5\linewidth}
\caption{Performance comparison in scenario with modality missing.}  
\label{tab:modality-missing}
\centering
\renewcommand\arraystretch{1.3}
\resizebox{\textwidth}{!}{
\begin{tabular}{r|cccc}
\toprule
Method     & $rate=0\%$  &  $rate=20\%$  & $rate=50\%$  \\\midrule
Naive MML  & 0.6507      &   0.5849      & 0.5242       \\\midrule
ReconBoost & 0.7557	     &   0.6321      & 0.5568       \\
MLA        & 0.7943      &   0.6935      & 0.5753       \\\midrule
Ours       &\bf 0.8515	 & \bf  0.7540   & \bf 0.6008   \\
\bottomrule
\end{tabular}
}
\end{minipage} 
\end{tabular}
\end{table}

{\bf Training Overhead: }Considering that our method introduces additional computational overhead due to the use of the sustained boosting strategy, we compare its training cost with that of competitive SOTA baselines, including Naive, PMR, AGM, MLA, and ReconBoost, through empirical experiments under the same setting. The results are shown in Figure~\ref{fig:train-time}. It can be observed that our method achieves the best accuracy while maintaining competitive training time.

{\bf Convergence of Gap Loss: }To further validate the convergence of the gap function, we conducted experiments on the \CREMAD~dataset. The change of gap function $\GM(\Phi)$ during training process is reported in Figure~\ref{fig:loss}, where we also report the unimodal loss and multimodal loss. It can be observed that as training progresses, the gap function gradually converges. Moreover, we report the accuracy changes during training in Figure~\ref{fig:accuracy}. From Figure~\ref{fig:accuracy}, it can be seen that the accuracy exhibits a similar convergence trend.

{\bf Impact of the Model Capacity: }Since our method utilizes multiple designed classifiers for weak modality, it essentially utilizes more model parameters than baselines. Does our method benefit solely from having more model parameters? We conduct an experiment to verify this. Specifically, we replace backbone ResNet18~(R18) of weak modality video as a larger backbone, i.e., ResNet34~(R34). Then we run the baselines and our method on \CREMAD~dataset. The results are shown in Table~\ref{tab:model-capacity}. From Table~\ref{tab:model-capacity}, we can find that our approach improves accuracy by nearly 20\% with only an additional 1M parameters compared to naive MML with the same network architecture. Our method also outperforms the naive MML baseline with larger backbone. Furthermore, we observe an interesting and counterintuitive phenomenon. That is, the method with ResNet34 is worse than that with ResNet18. The reason behind this may be that the ResNet34-based method is more difficult to converge.

{\bf Stability under Modality Missing: }Our method is adaptable to scenarios involving missing modalities. We comprehensively evaluate its performance under test-time missing modality conditions. Specifically, test-time missing refers to cases where the modalities are complete during training but missing during the testing phase. The experiments are conducted on \CREMAD~dataset with different missing rate~\cite{MLA:conf/cvpr/ZhangYBY24}. Naive MML, ReconBoost~\cite{ReconBoost:conf/icml/CongHua24}, and MLA~\cite{MLA:conf/cvpr/ZhangYBY24} are selected as baselines for comparison, where MLA introduces specifically designed algorithms to address the corresponding challenges in modality missing. We report the results with missing rate 20\% and 50\% in Table~\ref{tab:modality-missing}. We can find that as the modality missing rate increases, the performance of all methods declines. Nevertheless, our method consistently achieves the best performance under all missing rates, demonstrating the effectiveness of our method in scenario with missing modality.

\section{Conclusion} \label{sec:conclusion}
To address the modality imbalance issue, we propose a novel multimodal learning approach by designing a sustained boosting algorithm to dynamically enhance the classification ability of weak modality. Concretely, we first propose a sustained boosting algorithm for multimodal learning by minimizing the classification and residual errors simultaneously. Then, we propose an adaptive classifier assignment strategy to dynamically facilitate the classification ability of weak modality. The effectiveness of the proposed boosting algorithm is theoretically guaranteed by analyzing the convergence properties of the cross-modal gap function. To this end, the classification ability can be rebalanced adaptively during the training procedure. Experiments on widely used datasets reveal that our proposed method can achieve state-of-the-art performance compared with various baselines by a large margin.

{\bf Limitations: }Our proposed method mainly focuses on the classifier of each modality. For early fusion MML, our method can extend to balance the strong, weak, and fusion classification abilities. In addition, our theoretical analysis only examines the effect of the boosting algorithm on the convergence of the cross-modal gap function. In the full iterative framework, the overall convergence behavior and its influence on the model’s learning capability warrant further investigation. We leave a more comprehensive investigation as future work.

\begin{ack}
This work was supported in part by the National Key R\&D Program of China (2022YFF0712100), in part by the NSFC (62276131, 62506168), in part by the Natural Science Foundation of Jiangsu Province of China under Grant (BK20240081, BG2024042, BK20251431), and in part by the Fundamental Research Funds for the Central Universities (No.30925010205).
\end{ack}

{\small
\bibliography{ref}
\bibliographystyle{abbrv}
}

\newpage
\section*{Reproducibility Checklist}

\begin{enumerate}

\item {\bf Claims}
    \item[] Question: Do the main claims made in the abstract and introduction accurately reflect the paper's contributions and scope?
    \item[] Answer: \answerYes{} 
    \item[] Justification:  Main contributions and scope were reflected in the abstract and introduction.
    \item[] Guidelines:
    \begin{itemize}
        \item The answer NA means that the abstract and introduction do not include the claims made in the paper.
        \item The abstract and/or introduction should clearly state the claims made, including the contributions made in the paper and important assumptions and limitations. A No or NA answer to this question will not be perceived well by the reviewers. 
        \item The claims made should match theoretical and experimental results, and reflect how much the results can be expected to generalize to other settings. 
        \item It is fine to include aspirational goals as motivation as long as it is clear that these goals are not attained by the paper. 
    \end{itemize}

\item {\bf Limitations}
    \item[] Question: Does the paper discuss the limitations of the work performed by the authors?
    \item[] Answer: \answerYes{} 
    \item[] Justification:  Limitations of the work was discussed in Section \ref{sec:conclusion}.
    \item[] Guidelines:
    \begin{itemize}
        \item The answer NA means that the paper has no limitation while the answer No means that the paper has limitations, but those are not discussed in the paper. 
        \item The authors are encouraged to create a separate "Limitations" section in their paper.
        \item The paper should point out any strong assumptions and how robust the results are to violations of these assumptions (e.g., independence assumptions, noiseless settings, model well-specification, asymptotic approximations only holding locally). The authors should reflect on how these assumptions might be violated in practice and what the implications would be.
        \item The authors should reflect on the scope of the claims made, e.g., if the approach was only tested on a few datasets or with a few runs. In general, empirical results often depend on implicit assumptions, which should be articulated.
        \item The authors should reflect on the factors that influence the performance of the approach. For example, a facial recognition algorithm may perform poorly when image resolution is low or images are taken in low lighting. Or a speech-to-text system might not be used reliably to provide closed captions for online lectures because it fails to handle technical jargon.
        \item The authors should discuss the computational efficiency of the proposed algorithms and how they scale with dataset size.
        \item If applicable, the authors should discuss possible limitations of their approach to address problems of privacy and fairness.
        \item While the authors might fear that complete honesty about limitations might be used by reviewers as grounds for rejection, a worse outcome might be that reviewers discover limitations that aren't acknowledged in the paper. The authors should use their best judgment and recognize that individual actions in favor of transparency play an important role in developing norms that preserve the integrity of the community. Reviewers will be specifically instructed to not penalize honesty concerning limitations.
    \end{itemize}

\item {\bf Theory assumptions and proofs}
    \item[] Question: For each theoretical result, does the paper provide the full set of assumptions and a complete (and correct) proof?
    \item[] Answer: \answerYes{} 
    \item[] Justification:  The theories and Corollaries presented in the paper are supported by proofs provided in Appendix~\ref{sec:Proof}.
    \item[] Guidelines:
    \begin{itemize}
        \item The answer NA means that the paper does not include theoretical results. 
        \item All the theorems, formulas, and proofs in the paper should be numbered and cross-referenced.
        \item All assumptions should be clearly stated or referenced in the statement of any theorems.
        \item The proofs can either appear in the main paper or the supplemental material, but if they appear in the supplemental material, the authors are encouraged to provide a short proof sketch to provide intuition. 
        \item Inversely, any informal proof provided in the core of the paper should be complemented by formal proofs provided in appendix or supplemental material.
        \item Theorems and Lemmas that the proof relies upon should be properly referenced. 
    \end{itemize}

    \item {\bf Experimental result reproducibility}
    \item[] Question: Does the paper fully disclose all the information needed to reproduce the main experimental results of the paper to the extent that it affects the main claims and/or conclusions of the paper (regardless of whether the code and data are provided or not)?
    \item[] Answer: \answerYes{} 
    \item[] Justification: As detailed in Section \ref{sec:exp}, we provide multimodal datasets and introduce the baseline methods, evaluation metrics, and implementation details.
    \item[] Guidelines:
    \begin{itemize}
        \item The answer NA means that the paper does not include experiments.
        \item If the paper includes experiments, a No answer to this question will not be perceived well by the reviewers: Making the paper reproducible is important, regardless of whether the code and data are provided or not.
        \item If the contribution is a dataset and/or model, the authors should describe the steps taken to make their results reproducible or verifiable. 
        \item Depending on the contribution, reproducibility can be accomplished in various ways. For example, if the contribution is a novel architecture, describing the architecture fully might suffice, or if the contribution is a specific model and empirical evaluation, it may be necessary to either make it possible for others to replicate the model with the same dataset, or provide access to the model. In general. releasing code and data is often one good way to accomplish this, but reproducibility can also be provided via detailed instructions for how to replicate the results, access to a hosted model (e.g., in the case of a large language model), releasing of a model checkpoint, or other means that are appropriate to the research performed.
        \item While NeurIPS does not require releasing code, the conference does require all submissions to provide some reasonable avenue for reproducibility, which may depend on the nature of the contribution. For example
        \begin{enumerate}
            \item If the contribution is primarily a new algorithm, the paper should make it clear how to reproduce that algorithm.
            \item If the contribution is primarily a new model architecture, the paper should describe the architecture clearly and fully.
            \item If the contribution is a new model (e.g., a large language model), then there should either be a way to access this model for reproducing the results or a way to reproduce the model (e.g., with an open-source dataset or instructions for how to construct the dataset).
            \item We recognize that reproducibility may be tricky in some cases, in which case authors are welcome to describe the particular way they provide for reproducibility. In the case of closed-source models, it may be that access to the model is limited in some way (e.g., to registered users), but it should be possible for other researchers to have some path to reproducing or verifying the results.
        \end{enumerate}
    \end{itemize}

\item {\bf Open access to data and code}
    \item[] Question: Does the paper provide open access to the data and code, with sufficient instructions to faithfully reproduce the main experimental results, as described in supplemental material?
    \item[] Answer: \answerYes{} 
    \item[] Justification: We release our code at \url{https://github.com/njustkmg/NeurIPS25-AUG}. Furthermore, all datasets we used in this paper are available online based on their corresponding paper.
    \item[] Guidelines:
    \begin{itemize}
        \item The answer NA means that paper does not include experiments requiring code.
        \item Please see the NeurIPS code and data submission guidelines (\url{https://nips.cc/public/guides/CodeSubmissionPolicy}) for more details.
        \item While we encourage the release of code and data, we understand that this might not be possible, so “No” is an acceptable answer. Papers cannot be rejected simply for not including code, unless this is central to the contribution (e.g., for a new open-source benchmark).
        \item The instructions should contain the exact command and environment needed to run to reproduce the results. See the NeurIPS code and data submission guidelines (\url{https://nips.cc/public/guides/CodeSubmissionPolicy}) for more details.
        \item The authors should provide instructions on data access and preparation, including how to access the raw data, preprocessed data, intermediate data, and generated data, etc.
        \item The authors should provide scripts to reproduce all experimental results for the new proposed method and baselines. If only a subset of experiments are reproducible, they should state which ones are omitted from the script and why.
        \item At submission time, to preserve anonymity, the authors should release anonymized versions (if applicable).
        \item Providing as much information as possible in supplemental material (appended to the paper) is recommended, but including URLs to data and code is permitted.
    \end{itemize}

\item {\bf Experimental setting/details}
    \item[] Question: Does the paper specify all the training and test details (e.g., data splits, hyperparameters, how they were chosen, type of optimizer, etc.) necessary to understand the results?
    \item[] Answer: \answerYes{} 
    \item[] Justification: All implementation details were provided in Section \ref{sec:exp}.
    \item[] Guidelines:
    \begin{itemize}
        \item The answer NA means that the paper does not include experiments.
        \item The experimental setting should be presented in the core of the paper to a level of detail that is necessary to appreciate the results and make sense of them.
        \item The full details can be provided either with the code, in appendix, or as supplemental material.
    \end{itemize}

\item {\bf Experiment statistical significance}
    \item[] Question: Does the paper report error bars suitably and correctly defined or other appropriate information about the statistical significance of the experiments?
    \item[] Answer: \answerYes{} 
    \item[] Justification: All results in this paper are obtained by conducting experiments on three random seeds.
    \item[] Guidelines:
    \begin{itemize}
        \item The answer NA means that the paper does not include experiments.
        \item The authors should answer "Yes" if the results are accompanied by error bars, confidence intervals, or statistical significance tests, at least for the experiments that support the main claims of the paper.
        \item The factors of variability that the error bars are capturing should be clearly stated (for example, train/test split, initialization, random drawing of some parameter, or overall run with given experimental conditions).
        \item The method for calculating the error bars should be explained (closed form formula, call to a library function, bootstrap, etc.)
        \item The assumptions made should be given (e.g., Normally distributed errors).
        \item It should be clear whether the error bar is the standard deviation or the standard error of the mean.
        \item It is OK to report 1-sigma error bars, but one should state it. The authors should preferably report a 2-sigma error bar than state that they have a 96\% CI, if the hypothesis of Normality of errors is not verified.
        \item For asymmetric distributions, the authors should be careful not to show in tables or figures symmetric error bars that would yield results that are out of range (e.g. negative error rates).
        \item If error bars are reported in tables or plots, The authors should explain in the text how they were calculated and reference the corresponding figures or tables in the text.
    \end{itemize}

\item {\bf Experiments compute resources}
    \item[] Question: For each experiment, does the paper provide sufficient information on the computer resources (type of compute workers, memory, time of execution) needed to reproduce the experiments?
    \item[] Answer: \answerYes{} 
    \item[] Justification: The computer resource information for all experiments was provided in Section \ref{sec:exp}. 
    \item[] Guidelines:
    \begin{itemize}
        \item The answer NA means that the paper does not include experiments.
        \item The paper should indicate the type of compute workers CPU or GPU, internal cluster, or cloud provider, including relevant memory and storage.
        \item The paper should provide the amount of compute required for each of the individual experimental runs as well as estimate the total compute. 
        \item The paper should disclose whether the full research project required more compute than the experiments reported in the paper (e.g., preliminary or failed experiments that didn't make it into the paper). 
    \end{itemize}
    
\item {\bf Code of ethics}
    \item[] Question: Does the research conducted in the paper conform, in every respect, with the NeurIPS Code of Ethics \url{https://neurips.cc/public/EthicsGuidelines}?
    \item[] Answer: \answerYes{} 
    \item[] Justification: We have checked the NeurIPS code of ethics for our paper.
    \item[] Guidelines:
    \begin{itemize}
        \item The answer NA means that the authors have not reviewed the NeurIPS Code of Ethics.
        \item If the authors answer No, they should explain the special circumstances that require a deviation from the Code of Ethics.
        \item The authors should make sure to preserve anonymity (e.g., if there is a special consideration due to laws or regulations in their jurisdiction).
    \end{itemize}

\item {\bf Broader impacts}
    \item[] Question: Does the paper discuss both potential positive societal impacts and negative societal impacts of the work performed?
    \item[] Answer: \answerNA{} 
    \item[] Justification: Our paper does not deal with this aspect of the problem.
    \item[] Guidelines:
    \begin{itemize}
        \item The answer NA means that there is no societal impact of the work performed.
        \item If the authors answer NA or No, they should explain why their work has no societal impact or why the paper does not address societal impact.
        \item Examples of negative societal impacts include potential malicious or unintended uses (e.g., disinformation, generating fake profiles, surveillance), fairness considerations (e.g., deployment of technologies that could make decisions that unfairly impact specific groups), privacy considerations, and security considerations.
        \item The conference expects that many papers will be foundational research and not tied to particular applications, let alone deployments. However, if there is a direct path to any negative applications, the authors should point it out. For example, it is legitimate to point out that an improvement in the quality of generative models could be used to generate deepfakes for disinformation. On the other hand, it is not needed to point out that a generic algorithm for optimizing neural networks could enable people to train models that generate Deepfakes faster.
        \item The authors should consider possible harms that could arise when the technology is being used as intended and functioning correctly, harms that could arise when the technology is being used as intended but gives incorrect results, and harms following from (intentional or unintentional) misuse of the technology.
        \item If there are negative societal impacts, the authors could also discuss possible mitigation strategies (e.g., gated release of models, providing defenses in addition to attacks, mechanisms for monitoring misuse, mechanisms to monitor how a system learns from feedback over time, improving the efficiency and accessibility of ML).
    \end{itemize}
    
\item {\bf Safeguards}
    \item[] Question: Does the paper describe safeguards that have been put in place for responsible release of data or models that have a high risk for misuse (e.g., pretrained language models, image generators, or scraped datasets)?
    \item[] Answer: \answerNA{} 
    \item[] Justification: Our paper does not deal with this aspect of the problem.
    \item[] Guidelines:
    \begin{itemize}
        \item The answer NA means that the paper poses no such risks.
        \item Released models that have a high risk for misuse or dual-use should be released with necessary safeguards to allow for controlled use of the model, for example by requiring that users adhere to usage guidelines or restrictions to access the model or implementing safety filters. 
        \item Datasets that have been scraped from the Internet could pose safety risks. The authors should describe how they avoided releasing unsafe images.
        \item We recognize that providing effective safeguards is challenging, and many papers do not require this, but we encourage authors to take this into account and make a best faith effort.
    \end{itemize}

\item {\bf Licenses for existing assets}
    \item[] Question: Are the creators or original owners of assets (e.g., code, data, models), used in the paper, properly credited and are the license and terms of use explicitly mentioned and properly respected?
    \item[] Answer: \answerYes{} 
    \item[] Justification: The assets (datasets, code, models) used in the paper are open source, and our use follows the relevant protocols.
    \item[] Guidelines:
    \begin{itemize}
        \item The answer NA means that the paper does not use existing assets.
        \item The authors should cite the original paper that produced the code package or dataset.
        \item The authors should state which version of the asset is used and, if possible, include a URL.
        \item The name of the license (e.g., CC-BY 4.0) should be included for each asset.
        \item For scraped data from a particular source (e.g., website), the copyright and terms of service of that source should be provided.
        \item If assets are released, the license, copyright information, and terms of use in the package should be provided. For popular datasets, \url{paperswithcode.com/datasets} has curated licenses for some datasets. Their licensing guide can help determine the license of a dataset.
        \item For existing datasets that are re-packaged, both the original license and the license of the derived asset (if it has changed) should be provided.
        \item If this information is not available online, the authors are encouraged to reach out to the asset's creators.
    \end{itemize}

\item {\bf New assets}
    \item[] Question: Are new assets introduced in the paper well documented and is the documentation provided alongside the assets?
    \item[] Answer: \answerYes{} 
    \item[] Justification: The code we released in this paper contains related documents.
    \item[] Guidelines:
    \begin{itemize}
        \item The answer NA means that the paper does not release new assets.
        \item Researchers should communicate the details of the dataset/code/model as part of their submissions via structured templates. This includes details about training, license, limitations, etc. 
        \item The paper should discuss whether and how consent was obtained from people whose asset is used.
        \item At submission time, remember to anonymize your assets (if applicable). You can either create an anonymized URL or include an anonymized zip file.
    \end{itemize}

\item {\bf Crowdsourcing and research with human subjects}
    \item[] Question: For crowdsourcing experiments and research with human subjects, does the paper include the full text of instructions given to participants and screenshots, if applicable, as well as details about compensation (if any)? 
    \item[] Answer: \answerNA{} 
    \item[] Justification: Our paper doesn’t involve the crowdsourcing experiments and research with human subjects.
    \item[] Guidelines:
    \begin{itemize}
        \item The answer NA means that the paper does not involve crowdsourcing nor research with human subjects.
        \item Including this information in the supplemental material is fine, but if the main contribution of the paper involves human subjects, then as much detail as possible should be included in the main paper. 
        \item According to the NeurIPS Code of Ethics, workers involved in data collection, curation, or other labor should be paid at least the minimum wage in the country of the data collector. 
    \end{itemize}

\item {\bf Institutional review board (IRB) approvals or equivalent for research with human subjects}
    \item[] Question: Does the paper describe potential risks incurred by study participants, whether such risks were disclosed to the subjects, and whether Institutional Review Board (IRB) approvals (or an equivalent approval/review based on the requirements of your country or institution) were obtained?
    \item[] Answer: \answerNA{} 
    \item[] Justification: Our paper doesn’t involve the potential risks.
    \item[] Guidelines:
    \begin{itemize}
        \item The answer NA means that the paper does not involve crowdsourcing nor research with human subjects.
        \item Depending on the country in which research is conducted, IRB approval (or equivalent) may be required for any human subjects research. If you obtained IRB approval, you should clearly state this in the paper. 
        \item We recognize that the procedures for this may vary significantly between institutions and locations, and we expect authors to adhere to the NeurIPS Code of Ethics and the guidelines for their institution. 
        \item For initial submissions, do not include any information that would break anonymity (if applicable), such as the institution conducting the review.
    \end{itemize}

\item {\bf Declaration of LLM usage}
    \item[] Question: Does the paper describe the usage of LLMs if it is an important, original, or non-standard component of the core methods in this research? Note that if the LLM is used only for writing, editing, or formatting purposes and does not impact the core methodology, scientific rigorousness, or originality of the research, declaration is not required.
    \item[] Answer: \answerNo{} 
    \item[] Justification: Not used at all (you can then skip the rest).
    \item[] Guidelines:
    \begin{itemize}
        \item The answer NA means that the core method development in this research does not involve LLMs as any important, original, or non-standard components.
        \item Please refer to our LLM policy (\url{https://neurips.cc/Conferences/2025/LLM}) for what should or should not be described.
    \end{itemize}

\end{enumerate}

\clearpage
\appendix

\renewcommand{\thefigure}{\Roman{figure}}
\renewcommand{\thetable}{\Roman{table}}

\makeatletter
\renewcommand{\theHtable}{A\arabic{table}}
\renewcommand{\theHfigure}{A\arabic{figure}}
\makeatother

\setcounter{figure}{0}
\setcounter{table}{0}

\section{Proof} \label{sec:Proof}
During training process, the gradient boosting strategy will be applied when the performance gap is larger than a threshold at each $c$ iterations. Now we analyze the convergence rate of $\GM$ with gradient boosting strategy.

Furthermore, we make the following assumption:
\begin{assumption}[Strongly Convexity]\label{ass:strong-convex}
We assume that the function $\{\LM^a(\cdot),\LM^v(\cdot)\}$ is $\mu$-strongly convex. That is,
for any ${\Phi}$ we have:
\begin{align}\label{ass:sconvex:eqn}
\forall o\in\{a,v\},\;\LM^o(\Phi^o)\ge\LM^o(\Phi^*)+\frac{\mu_o}{2}\|\Phi^o-\Phi^*\|^2
\end{align}
and
\begin{align}
\forall o\in\{a,v\},\;\|\nabla\LM^o(\Phi(t))\|^2\ge\mu_o|\LM^o(\Phi(t))-\LM^o(\Phi^*)|
\end{align}
\end{assumption}

\begin{assumption}[Smoothness]\label{ass:smooth}
We assume that the functions $\{\GM(\cdot), \LM_a(\cdot),\LM_v(\cdot)\}$ are $L$-smooth. That is,
for any ${\Phi}, {\Phi}'$, we have
\begin{align}\label{ass:smotth:eqn}
\forall o\in\{a,v\},\;\LM^o\left(\Phi\right) -\LM^o\left(\Phi'\right)
 \le\langle \nabla \LM^o\left(\Phi'\right), \Phi - \Phi'\rangle + \frac{L_o}{2}\|\Phi - \Phi'\|^2.
\end{align}
and 
\begin{align}
\GM\left(\Phi\right) -\GM\left(\Phi'\right)
 \le\langle \nabla \GM\left(\Phi'\right), \Phi - \Phi'\rangle + \frac{L_g}{2}\|\Phi - \Phi'\|^2.\label{eq:smooth-GM}
\end{align}
\end{assumption}

\begin{assumption}[Effectiveness of Weak Classifier]\label{ass:classification}
We assume that the gradient boosting strategy for weak modality is effective. That is, there exists a $\nu\in(0,1)$:
\begin{align}
\langle \nabla \LM^a\left(\Phi(t)\right), h(\Phi(t))\rangle\le-\nu\|\nabla \LM^a\left(\Phi(t)\right)\|^2.\label{eq2}
\end{align}
And there exists a constant $\beta>0$:
\begin{align}
\|h(\Phi(t))\|\le\beta\|\nabla \LM^a\left(\Phi(t)\right)\|,\label{eq3}
\end{align}
where $h(\cdot)$ denotes the function obtained by boosting algorithm. $h(\cdot)$ aims to fit the negative gradient of weak modality model.
\end{assumption}

Based on the definition of $h(\cdot)$, the updating rule can be formed as:
\begin{align}
\Phi(t+1)
&=\Phi(t)-\eta\nabla\LM(\Phi(t))\\
&=\Phi(t)+\eta h(\Phi(t))
\end{align}
where $\eta$ denotes the learning rate.

\begin{assumption}[Optimal of Multimodels]\label{ass:optimal}
We assume that there exists an optimal solution $\Phi^*$ so that we have: $\nabla\LM^a(\Phi^*)=\nabla\LM^v(\Phi^*)=0$.
\end{assumption}

\begin{assumption}[Bounded Initial Value of $\GM(\Phi(t))$]\label{ass:initial}
We assume that the initial value of $\GM(\Phi(t))$ is bounded, i.e., $\GM(\Phi(0))\le\infty$.
\end{assumption}

Based on Assumption~(\ref{ass:strong-convex}),~(\ref{ass:optimal}), we have the following lemma:
\begin{lemma}[Gap Bound] There exists a constant $\kappa$ such that the loss gap function $\GM(\cdot)$ and the gradient norm of the strong modality satisfy the following relationship:\label{lemma:gradient-norm}
\begin{align}
\|\nabla \LM^a(\Phi_t)\|\ge\kappa|\GM(\Phi_t)|.
\end{align}
\end{lemma}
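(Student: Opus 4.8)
The plan is to combine the two parts of the strong-convexity assumption with the optimality condition to extract a lower bound on the gradient norm of the strong modality in terms of the gap function. First I would use the Polyak--Łojasiewicz-type inequality from Assumption~\ref{ass:strong-convex}, namely $\|\nabla\LM^a(\Phi(t))\|^2\ge\mu_a|\LM^a(\Phi(t))-\LM^a(\Phi^*)|$, which already controls $\|\nabla\LM^a\|$ by the suboptimality of the strong modality. The task then reduces to showing that the suboptimality $\LM^a(\Phi(t))-\LM^a(\Phi^*)$ is itself bounded below by a constant multiple of $\GM(\Phi(t))=\LM^a(\Phi^a)-\LM^v(\Phi^v)$; under the standing assumption $\LM^a>\LM^v$ the gap is nonnegative, so $|\GM(\Phi(t))|=\GM(\Phi(t))$.

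The key step is the observation that $\GM(\Phi(t))=\big(\LM^a(\Phi(t))-\LM^a(\Phi^*)\big)+\big(\LM^a(\Phi^*)-\LM^v(\Phi^*)\big)-\big(\LM^v(\Phi(t))-\LM^v(\Phi^*)\big)$. Since $\Phi^*$ minimizes both losses, $\LM^v(\Phi(t))-\LM^v(\Phi^*)\ge 0$, and if we additionally posit (or absorb into the constant) that the optimal losses coincide or that $\LM^a(\Phi^*)-\LM^v(\Phi^*)$ is nonpositive, we obtain $\GM(\Phi(t))\le\LM^a(\Phi(t))-\LM^a(\Phi^*)$. Feeding this into the PL inequality gives $\|\nabla\LM^a(\Phi(t))\|^2\ge\mu_a\,\GM(\Phi(t))$. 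To get the claimed \emph{linear} bound $\|\nabla\LM^a(\Phi_t)\|\ge\kappa|\GM(\Phi_t)|$ rather than a square-root bound, I would invoke Assumption~\ref{ass:initial}: since $\GM(\Phi(t))\le\GM(\Phi(0))$ is bounded (the gap is decreasing along the trajectory, as the downstream convergence argument establishes), we have $\sqrt{\GM(\Phi(t))}\ge \GM(\Phi(t))/\sqrt{\GM(\Phi(0))}$, so that $\|\nabla\LM^a(\Phi(t))\|\ge\sqrt{\mu_a}\,\sqrt{\GM(\Phi(t))}\ge\frac{\sqrt{\mu_a}}{\sqrt{\GM(\Phi(0))}}\,\GM(\Phi(t))$, and one sets $\kappa=\sqrt{\mu_a/\GM(\Phi(0))}$.

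The main obstacle I anticipate is the sign bookkeeping around $\LM^a(\Phi^*)-\LM^v(\Phi^*)$: the lemma as stated does not explicitly assume the two modalities share the same optimal value, only that $\Phi^*$ is a common stationary point (Assumption~\ref{ass:optimal}), so strictly one needs either an extra hypothesis that $\LM^a(\Phi^*)\le\LM^v(\Phi^*)$ or a reinterpretation of $\GM$ near the optimum. I would handle this by noting that at $\Phi^*$ both gradients vanish, so in the regime of interest the residual gap $\LM^a(\Phi^*)-\LM^v(\Phi^*)$ can be taken to be the floor of $\GM$ and folded into the constant, or simply assume $\Phi^*$ is a shared global minimizer with $\LM^a(\Phi^*)=\LM^v(\Phi^*)$ (consistent with the "Optimal of Multimodels" framing). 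The remaining steps — applying the PL inequality and the boundedness of the initial gap — are routine.
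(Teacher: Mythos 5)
Your argument follows essentially the same route as the paper's: the Polyak--\L ojasiewicz inequality from Assumption~\ref{ass:strong-convex} controls $\|\nabla\LM^a\|^2$ by the suboptimality of the strong modality, the suboptimality dominates the gap once one assumes $\LM^a(\Phi^*)=\LM^v(\Phi^*)$ and uses $\LM^v(\Phi(t))\ge\LM^v(\Phi^*)$, and a uniform upper bound on $|\GM(\Phi(t))|$ converts the resulting square-root bound into the claimed linear one. The caveat you raise about the sign of $\LM^a(\Phi^*)-\LM^v(\Phi^*)$ is exactly the point where the paper inserts the (unstated-in-the-assumptions) hypothesis that the two modalities share the same optimal value, so your bookkeeping matches theirs. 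The only real divergence is the source of the uniform bound on the gap: the paper derives $|\GM(\Phi(t))|\le c=\tfrac{L_a+L_v}{2}\|\Phi(0)-\Phi^*\|^2$ from smoothness together with a geometric contraction $\|\Phi(t)-\Phi^*\|\le\rho^t\|\Phi(0)-\Phi^*\|$, whereas you use $\GM(\Phi(t))\le\GM(\Phi(0))$ via monotonicity of the gap plus Assumption~\ref{ass:initial}. Your version is arguably cleaner (it avoids the paper's unproved contraction claim), but you should make explicit that the monotonicity you invoke comes from the descent inequality $\GM(\Phi(t+1))\le\GM(\Phi(t))-(\nu\eta_t-\tfrac{L_a\eta_t^2\beta^2}{2})\|\nabla\LM^a(\Phi(t))\|^2$, which is established from Assumptions~\ref{ass:smooth} and~\ref{ass:classification} alone and does not use the lemma --- otherwise the argument would look circular, since the lemma is later fed back into that same descent inequality to prove Theorem~\ref{thm:cg-rate}.
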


\begin{proof}[Proof of Lemma~\ref{lemma:gradient-norm}]
Since $\LM^a(\cdot)$ and $\LM^v(\cdot)$ are strongly convex, there exists an constant $\rho\in(0,1)$ satisfying:
\begin{align}
\|\Phi(t)-\Phi^*\|\le\rho^t\|\Phi(0)-\Phi^*\|
\end{align}

Since the smoothness, we have:
\begin{align}
\forall o\in\{a,v\},\LM^o(\Phi(t))-\LM^o(\Phi^*)
&\le\frac{L^o}{2}\|\Phi(t)-\Phi^*\|^2\\
&\le\frac{L^o}{2}\rho^{2t}\|\Phi(0)-\Phi^*\|^2
\end{align}

Then we have:
\begin{align}
\GM(\Phi(t))
&=\LM^a(\Phi(t))-\LM^v(\Phi(t))\\
&=\LM^a(\Phi(t))-\LM^a(\Phi^*)-(\LM^v(\Phi(t))-\LM^v(\Phi^*))\\
&\le|\LM^a(\Phi(t))-\LM^a(\Phi^*)|+|(\LM^v(\Phi(t))-\LM^v(\Phi^*))|\\
&\le\left(\frac{L_a}{2}+\frac{L_v}{2}\right)\rho^{2t}\|\Phi(0)-\Phi^*\|^2\\
&\le\left(\frac{L_a}{2}+\frac{L_v}{2}\right)\|\Phi(0)-\Phi^*\|^2
\end{align}

By setting $c=\left(\frac{L_a}{2}+\frac{L_v}{2}\right)\|\Phi(0)-\Phi^*\|^2$, we have:
\begin{align}
|\GM(\Phi(t))|\le c
\end{align}

Then we have:
\begin{align}
|\GM(\Phi(t))|\ge\frac{1}{c}|\GM(\Phi(t))|^2
\end{align}

According to Assumption~(\ref{ass:strong-convex}), we have:
\begin{align}
\|\nabla\LM^a(\Phi(t))\|^2\ge\mu_a|\LM^a(\Phi(t))-\LM^a(\Phi^*)|
\end{align}

Suppose that two modalities achieve the same optimal value, i.e., $\LM^a(\Phi^*)=\LM^v(\Phi^*)$. Then we have:
\begin{align}
\|\nabla\LM^a(\Phi(t))\|^2
&\ge2\mu|\LM^a(\Phi(t))-\LM^a(\Phi^*)|\\
&=2\mu|\LM^a(\Phi(t))-\LM^v(\Phi^*)|\\
&\ge2\mu|\LM^a(\Phi(t))-\LM^v(\Phi(t))|\\
&=2\mu|\GM(\Phi(t))|\\
&\ge\frac{2\mu}{c}|\GM(\Phi(t))|^2
\end{align}

By setting $\kappa=\sqrt{\frac{2\mu}{c}}$, we have:
\begin{align}
\|\nabla \LM^a(\Phi(t))\|\ge\kappa|\GM(\Phi(t))|
\end{align}

\end{proof}

We have the following theorem:
\begin{thm}[Convergence of $\GM$ with Gradient Boosting]\label{thm:cg-rate}
Under assumption~(\ref{ass:smooth}) and (\ref{ass:classification}), if the learning rate is set as $\eta_t=\frac{\nu}{L_a\beta^2}$, we have:
\begin{align}
\GM(\Phi(T))\le \frac{\GM(\Phi(0))}{1+dT\GM(\Phi(0))}=\OM(\frac{1}{T}),
\end{align}
where $d=\frac{\nu^2\kappa^2}{2L_a\beta^2}$.
\end{thm}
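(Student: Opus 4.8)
The plan is to establish a one-step quadratic contraction of the form $\GM(\Phi(t+1))\le\GM(\Phi(t))-d\,\GM(\Phi(t))^2$ with $d=\frac{\nu^2\kappa^2}{2L_a\beta^2}$, and then convert this recursion into the advertised $\OM(1/T)$ rate by a standard telescoping argument on reciprocals.

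First I would apply the smoothness of the weak-modality loss from Assumption~\ref{ass:smooth} along the boosting update $\Phi(t+1)-\Phi(t)=\eta\,h(\Phi(t))$, giving
\[
\LM^a(\Phi(t+1))-\LM^a(\Phi(t))\le\eta\,\langle\nabla\LM^a(\Phi(t)),h(\Phi(t))\rangle+\frac{L_a\eta^2}{2}\|h(\Phi(t))\|^2.
\]
Next I would insert the two effectiveness bounds of Assumption~\ref{ass:classification}: the correlation bound~\eqref{eq2} replaces the inner product by $-\nu\|\nabla\LM^a(\Phi(t))\|^2$, and the norm bound~\eqref{eq3} bounds $\|h(\Phi(t))\|^2$ by $\beta^2\|\nabla\LM^a(\Phi(t))\|^2$, producing the coefficient $\bigl(-\eta\nu+\tfrac{L_a\eta^2\beta^2}{2}\bigr)\|\nabla\LM^a(\Phi(t))\|^2$. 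Substituting the prescribed step size $\eta=\nu/(L_a\beta^2)$ collapses this coefficient to exactly $-\nu^2/(2L_a\beta^2)$, so that $\LM^a(\Phi(t+1))-\LM^a(\Phi(t))\le-\frac{\nu^2}{2L_a\beta^2}\|\nabla\LM^a(\Phi(t))\|^2$. Finally I would invoke Lemma~\ref{lemma:gradient-norm} to lower-bound $\|\nabla\LM^a(\Phi(t))\|^2$ by $\kappa^2\GM(\Phi(t))^2$, expressing the per-step decrease of $\LM^a$ purely in terms of the gap function.

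The step I expect to be the main obstacle is passing from the decrease of $\LM^a$ to the decrease of $\GM=\LM^a-\LM^v$: since the encoder is shared, in principle the boosting update could also move $\LM^v$, so one needs to argue that the step does not decrease the strong-modality loss (consistent with the setting in which the strong branch is held essentially fixed while the weak classifier is boosted), which then yields $\GM(\Phi(t+1))-\GM(\Phi(t))\le\LM^a(\Phi(t+1))-\LM^a(\Phi(t))\le-d\,\GM(\Phi(t))^2$. An alternative that avoids this assumption is to run the same descent computation directly on $\GM$ via~\eqref{eq:smooth-GM}, at the cost of replacing $L_a$ by $L_g$ in the constant; I would note this as a remark.

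To close, write $g_t:=\GM(\Phi(t))$, which is non-negative under the convention $\LM^a>\LM^v$ and remains so since the contraction is monotone decreasing (with a brief check that $d\,g_t\le1$, so no overshoot, using Assumption~\ref{ass:initial}). Dividing $g_{t+1}\le g_t-d\,g_t^2$ by $g_tg_{t+1}>0$ and using $g_{t+1}\le g_t$ gives $\frac{1}{g_{t+1}}\ge\frac{1}{g_t}+d$. Telescoping over $t=0,\dots,T-1$ yields $\frac{1}{g_T}\ge\frac{1}{g_0}+dT=\frac{1+dT\,\GM(\Phi(0))}{\GM(\Phi(0))}$, which rearranges to $\GM(\Phi(T))\le\frac{\GM(\Phi(0))}{1+dT\,\GM(\Phi(0))}=\OM(1/T)$, completing the proof.
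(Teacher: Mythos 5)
Your proposal follows essentially the same route as the paper's proof: a one-step descent bound on $\LM^a$ from smoothness plus Assumption~\ref{ass:classification}, the step size $\eta=\nu/(L_a\beta^2)$ collapsing the coefficient to $-\nu^2/(2L_a\beta^2)$, Lemma~\ref{lemma:gradient-norm} to convert the gradient norm into $\kappa^2\GM(\Phi(t))^2$, and reciprocal telescoping. The one point you flag as an obstacle --- whether the update moves $\LM^v$ --- is dispatched in the paper by simply asserting $\LM^v(\Phi(t+1))=\LM^v(\Phi(t))$ on the grounds that the modality parameters are independent, and your explicit no-overshoot check ($d\,g_t\le 1$) is a minor tightening of the paper's reciprocal manipulation rather than a different argument.
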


\begin{proof}[Proof of Theorem~\ref{thm:cg-rate}]
Based on smooth assumption, we have:
\begin{align}
\LM^a(\Phi(t+1))
&\le\LM^a(\Phi(t))+\langle\nabla\LM^a(\Phi(t)),\Phi(t+1)-\Phi(t)\rangle+\frac{L_a}{2}\|\Phi(t+1)-\Phi(t)\|^2,\\
&=\LM^a(\Phi(t))+\eta_t\langle\nabla\LM^a(\Phi(t)),h(\Phi(t))\rangle+\frac{L_a\eta_t^2}{2}\|h(\Phi(t))\|^2,\\
&\overset{(\ref{eq2})}{\le}\LM^a(\Phi(t))-\nu\eta_t\|\nabla\LM^a(\Phi(t))\|^2+\frac{L_a\eta_t^2}{2}\|h(\Phi(t))\|^2,\\
&\overset{(\ref{eq3})}{\le}\LM^a(\Phi(t))-\nu\eta_t\|\nabla\LM^a(\Phi(t))\|^2+\frac{L_a\eta_t^2\beta^2}{2}\|\nabla\LM^a(\Phi(t))\|^2
\end{align}

Considering that the parameters for each modality are independent, we have: $\LM^v(\Phi(t+1))=\LM^v(\Phi(t))$. Then we have:
\begin{align}
\GM(\Phi({t+1}))\le\GM(\Phi(t))-(\nu\eta_t-\frac{L_a\eta_t^2\beta^2}{2})\|\nabla\LM^a(\Phi(t))\|^2
\end{align}

By setting $\eta_t^a=\frac{\nu}{L_a\beta^2}$, we have:
\begin{align}
\GM(\Phi({t+1}))\le\GM(\Phi(t))-\frac{\nu^2}{2L_a\beta^2}\|\nabla\LM^a(\Phi(t))\|^2
\end{align}

By using lemma~(\ref{lemma:gradient-norm}), we have:
\begin{align}
\GM(\Phi(t+1))\le\GM(\Phi(t))-\frac{\nu^2\kappa^2}{2L_a\beta^2}|\GM(\Phi(t))|^2.
\end{align}

Then we set $d=\frac{\nu^2\kappa^2}{2L_a\beta^2}$:
\begin{align}
\frac{1}{\GM(\Phi(t+1))}
&\ge\frac{1}{\GM(\Phi(t))-d|\GM(\Phi(t))|^2}\\
&\ge\frac{1}{\GM(\Phi(t))}(1+d\GM(\Phi(t)))\\
&=\frac{1}{\GM(\Phi(t))}+d
\end{align}

By summing up for $k=1,\cdots,T$, we have:
\begin{align}
\GM(\Phi(T))\le\frac{\GM(\Phi(0))}{1+dT\GM(\Phi(0))}=\OM(\frac{1}{T}).
\end{align}
\end{proof}

\section{Notation Definition}
We summarize the notation definition we used in this paper in Table~\ref{tab:notation}.

\begin{table}[!htbp]
\centering
\caption{Notation Definition}
\label{tab:notation}
\begin{tabular}{c|c}
\toprule
Notation                         & Description                                       \\\midrule
$N$                              & The number of training data.                      \\
$K$                              & The number of category labels.                    \\
$\x^a/\x^v$                      & Audio/video data point.                           \\
$\X=\{(\x_i^a,\x_i^v)\}_{i=1}^N$ & Training set.                                     \\
$\y_i\in\{0,1\}^K$               & Category label of $i$-th data.                    \\
$\Y=\{\y_i\}_{i=1}^N$            & Category label set.                               \\
Superscript $o$                  & $o$-modality.                                     \\
Superscript $a/v$                & Audio/video modality.                             \\
$\phi^o(\cdot)$                  & Encoder of $o$ modality.                          \\
$\theta^o$                       & Parameters of $\phi^o(\cdot)$.                    \\
$\u^o$                           & Feature of $\x^o$.                                \\
$\psi^o(\cdot)$                  & Classifier of $o$-modality.                       \\
$\Theta^o$                       & Parameters of $\psi^o(\cdot)$.                    \\
$\p^o$                           & Prediction of $\x^o$.                             \\
$\Phi^o=\{\theta^o,\Theta^o\}$   & Parameters set.                                   \\
$\ell(\cdot)$                    & Cross entropy loss.                               \\
$\LM_{\mathrm{CE}}(\cdot)$       & Cross entropy loss over training.                 \\
$\lambda$                        & Hyper-parameter used to soften label.             \\
$\hat\y^o_{it}$                  & Residual label of $\x^o_i$ at $t$-th iteration.   \\
$\epsilon(\cdot)$                & Classification loss.                              \\
$\epsilon_{\mathrm{all}}(\cdot)$ & Loss for $t$ classifiers.                         \\
$\epsilon_{\mathrm{pre}}(\cdot)$ & Loss for $t-1$ classifiers.                       \\
$L(\cdot)$                       & Objective function.                               \\
$\LM_{\mathrm{SUB}}(\cdot)$      & Overall objective function.                       \\
$n^o$                            & The number of classifier for $o$-modality.        \\
$s_t^o$                          & Confidence for $o$-modality at $t$-th iteration.  \\
$\sigma$                         & Coefficient of confidence score.                  \\
$\tau$                           & Tolerance score.                                  \\
$t_N$                            & Adjustment frequency.                             \\
$\GM(\cdot)$                     & Gap function.                                     \\
\bottomrule
\end{tabular}
\end{table}

\section{Additional Experiments}
\subsection{Datasets}

\noindent{\bf\CREMAD:} \CREMAD~\cite{CREMAD:journals/taffco/CaoCKGNV14} is an audio-visual dataset designed for speech emotion recognition, It comprises 7,442 video clips of 2$\sim$3 seconds from 91 actors speaking several short words. which are divided into 6,698  training samples and 744 testing samples. This dataset includes the six most common emotions: angry, happy, sad, neutral, discarding, disgust and fear.

\noindent{\bf\Kinetics:} \Kinetics~\cite{Kinetics-Sound:conf/iccv/ArandjelovicZ17} dataset is a commonly used dataset containing 31 action categories that can be recognized visually and auditorily, which contains 19,000 10$\sim$second video clips from YouTube. This dataset is divided into training set with 15,000 clips, validation set with 1,900 clips, and testing set with 1,900 clips.

\noindent{\bf\NVGesture:} \NVGesture~\cite{NVGeasture:conf/cvpr/MolchanovYGKTK16} dataset is a multimodal dataset specifically designed for gesture recognition, containing three types of data modalities, i.e., RGB, optical flow~(OF), and Depth. This dataset includes 25 different gesture categories, covering a variety of common gestures. This dataset is divided into 1,050 samples for training and 482 samples for testing.

\noindent{\bf\VGGSound:} \VGGSound~\cite{VGGSound:conf/icassp/ChenXVZ20} dataset is an audio-visual dataset in the wild, with nearly 200K 10-second video
clips. Each sound-emitting object is also visible in the corresponding video clip in this dataset. After filtering out unavailable videos, 168,618 videos for training and validation, and 13,954 videos for testing in experimental settings.

\noindent{\bf\Twitter:} \Twitter~\cite{Twitter15:conf/ijcai/Yu019} dataset is a dataset designed for multilingual sentiment analysis, which is divided into training set with 3,197 pairs, validation set with 1,122 pairs and testing set with 1,037 pairs. This dataset consists of tweets collected from Twitter\footnote{\url{http://twitter.com}} with three different sentiment labels: positive, negative, and neutral.

\noindent{\bf\Sarcasm:} \Sarcasm~\cite{Sarcasm:conf/acl/CaiCW19} dataset is specifically designed for sarcasm detection tasks, containing a large amount of text data labeled as sarcastic or non-sarcastic, which typically draws from various sources, such as social media, news comments, and conversation data, with each entry clearly labeled for sarcasm. This dataset includes 19,816 pairs for the training set, 2,410 pairs for the validation set, and 2,409 pairs for the testing set.

\subsection{Implementations of Toy Experiments}
The toy experiment is designed to validate the motivation behind our method. Specifically, it is conducted on the \CREMAD~dataset, where the task is multimodal classification. Four methods are adopted for comparison: Naive MML, G-Blend~\cite{OGR-GB:WangTF20}, MML w/ GB, and Ours. Here, MML w/ GB refers to applying the gradient boosting algorithm to further improve the trained video model obtained from Naive MML, while keeping the audio model fixed. Ours refers to the method employing the sustained boosting algorithm proposed in this paper. Furthermore, ResNet-18 is used as the encoder for all methods, with its parameters randomly initialized. The selection of other hyperparameters is kept consistent with the main experiment.

\subsection{More Results for Ablation Study}
{\bf MAP Results on \CREMAD~and \Kinetics~Datasets: }We have added the MAP results of the ablation study on the \CREMAD~and \Kinetics~datasets in Table~\ref{tab:added-ablation}. The MAP results in Table~\ref{tab:added-ablation} demonstrate that the objectives defined in Equation~(\ref{eq:residual}),~(\ref{eq:overall}), and~(\ref{eq:previous}) all lead to performance gains.

\begin{table}[!htbp]
\centering
\caption{The MAP for ablation study on \CREMAD~and \Kinetics~datasets.}
\label{tab:added-ablation}
\centering
\begin{tabular}{ccc|ccc|ccc}
\toprule
\multirow{2}{*}{$\epsilon$}&\multirow{2}{*}{$\epsilon_{o}$}&\multirow{2}{*}{$\epsilon_{p}$}& &\CREMAD & & &\Kinetics  & \\
\cmidrule(lr){4-6}\cmidrule(lr){7-9}
            &             &          & Multi   & Audio   & Video   & Multi   & Audio   & Video   \\\midrule
\tikzcmark  &\tikzxmark   &\tikzcmark&0.8967   &0.7061   &0.7523   &0.7796   &0.5465   &0.5484   \\   
\tikzxmark  &\tikzcmark   &\tikzxmark&0.8895   &0.7235   &0.7535   &0.7864   &\bf0.5482&0.5606   \\   
\tikzxmark  &\tikzcmark   &\tikzcmark&0.9014   &0.7321   &0.7335   &0.7821   &0.5474   &0.5651   \\   
\tikzcmark  &\tikzcmark   &\tikzcmark&\bf0.9103&\bf0.7529&\bf0.7612&\bf0.7901&0.5461   &\bf0.6040\\   
\bottomrule
\end{tabular}
\end{table}

{\bf Ablation Study on Image-Text and Tri-Modal Datasets: }To further validate our method, we report an ablation study on image-text dataset, i.e., \Twitter~dataset, and the dataset with three modalities, i.e., \NVGesture~dataset. The accuracy and F1 are reported in Table~\ref{tab:Twitter-ablation} and Table~\ref{tab:NVGesture-ablation}. The results demonstrate that the objectives defined in Equation~(\ref{eq:residual}),~(\ref{eq:overall}), and~(\ref{eq:previous}) all lead to performance gains in terms of both accuracy and F1. 

\begin{table}[!htbp]
\centering
\caption{The results for ablation study on \Twitter~dataset.}
\label{tab:Twitter-ablation}
\centering
\begin{tabular}{ccc|ccc|ccc}
\toprule
\multirow{2}{*}{$\epsilon$}&\multirow{2}{*}{$\epsilon_{o}$}&\multirow{2}{*}{$\epsilon_{p}$}&\multicolumn{3}{c|}{Accuracy}&\multicolumn{3}{c}{F1}\\
\cmidrule(lr){4-6}\cmidrule(lr){7-9}
            &             &          & Multi & Image  & Text  & Multi & Image  & Text    \\\midrule
\tikzcmark  &\tikzxmark   &\tikzcmark&  0.7425 &  0.5400 &  0.7406 & 0.6830 & 0.3957 & 0.6868\\   
\tikzxmark  &\tikzcmark   &\tikzxmark&  0.7396 &  0.5429 &  0.7387 & 0.6787 &\bf0.4113 & 0.6806\\   
\tikzxmark  &\tikzcmark   &\tikzcmark&  0.7445 &  0.5265 &  0.7454 & 0.6767 & 0.3865 & 0.6794\\   
\tikzcmark  &\tikzcmark   &\tikzcmark&\bf0.7512 &\bf0.5434 &\bf0.7488 &\bf0.6962 & 0.3998 &\bf0.6911\\   
\bottomrule
\end{tabular}
\end{table}

\begin{table}[!htbp]
\centering
\caption{The results for ablation study on \NVGesture~dataset.}
\label{tab:NVGesture-ablation}
\centering
\begin{tabular}{ccc|cccc|cccc}
\toprule
\multirow{2}{*}{$\epsilon$}&\multirow{2}{*}{$\epsilon_{o}$}&\multirow{2}{*}{$\epsilon_{p}$}&\multicolumn{4}{c|}{Accuracy}&\multicolumn{4}{c}{F1}\\
\cmidrule(lr){4-7}\cmidrule(lr){8-11}
            &             &          & Multi & RGB  & OF  & Depth  & Multi & RGB  & OF  & Depth  \\\midrule
\tikzcmark  &\tikzxmark   &\tikzcmark&0.8465 & 0.7573 & 0.7780 & 0.7946        & 0.8505 & 0.7649 & 0.7809 & 0.7968\\   
\tikzxmark  &\tikzcmark   &\tikzxmark&0.8402 & 0.7427 & 0.7718 & 0.7759        & 0.8428 & 0.7604 & 0.7720 & 0.7812\\   
\tikzxmark  &\tikzcmark   &\tikzcmark&0.8485 &\bf 0.8008 & 0.7988 & 0.7801     & 0.8526 & \bf 0.8085 & 0.8009 & 0.7881\\   
\tikzcmark  &\tikzcmark   &\tikzcmark&\bf0.8501 & 0.7988 &\bf 0.8029 & \bf 0.7967&\bf 0.8533 & 0.8076 &\bf 0.8057 &\bf 0.8033\\   
\bottomrule
\end{tabular}
\end{table}

\subsection{Evaluation of Generalization in More Scenarios}
To further validate the effectiveness of our method, we employ a actor-independent data partitioning strategy on the \CREMAD~dataset, ensuring that the training, validation, and test sets contain mutually exclusive subjects following the setting of~\cite{Emotion:conf/icassp/GoncalvesB22,GameTheory:conf/nips/Konstantinos25}. Specifically, we randomly select the audio-video data of 63 actors for training, 14 actors for validation, and 14 actors for testing, with no overlap of actors among the three splits.

We first investigate the influence of the hyperparameters $\sigma$ and $\lambda$ on actor-independent \CREMAD~dataset. The results are shown in Figure~\ref{fig:sigma-cremad} and Figure~\ref{fig:lambda-cremad}. The results demonstrate that our method is not sensitive to hyper-parameters $\sigma$ and $\lambda$. Furthermore, the ablation study on actor-independent \CREMAD~dataset is shown in Table~\ref{tab:re-split-cremad-ablation}. The results further verify the effectiveness of the objectives defined in Equation~(\ref{eq:residual}),~(\ref{eq:overall}), and~(\ref{eq:previous}).

\begin{figure}[!htbp]
\centering
\begin{minipage}[t]{0.3\textwidth}
\centering
\includegraphics[width=\linewidth]{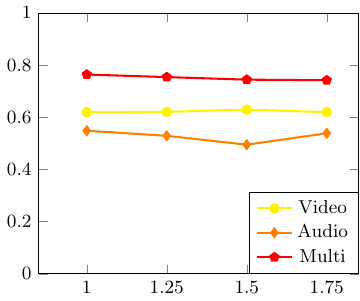}
\caption{Sensitivity to $\sigma$.}
\label{fig:sigma-cremad}
\end{minipage}
\begin{minipage}[t]{0.3\textwidth}
\centering
\includegraphics[width=\linewidth]{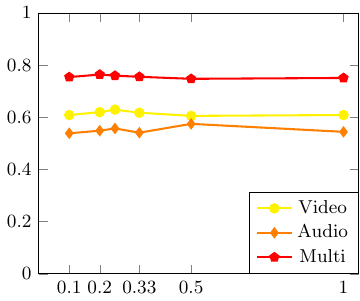}
\caption{Sensitivity to $\lambda$.}
\label{fig:lambda-cremad}
\end{minipage}
\end{figure}

\begin{table}[!htbp]
\centering
\caption{The results for ablation study on actor-independent \CREMAD~dataset.}
\label{tab:re-split-cremad-ablation}
\centering
\begin{tabular}{ccc|ccc|ccc}
\toprule
\multirow{2}{*}{$\epsilon$}&\multirow{2}{*}{$\epsilon_{o}$}&\multirow{2}{*}{$\epsilon_{p}$}&\multicolumn{3}{c|}{Accuracy}&\multicolumn{3}{c}{MAP}\\
\cmidrule(lr){4-6}\cmidrule(lr){7-9}
            &             &          & Multi & Image  & Text  & Multi & Image  & Text    \\\midrule
\tikzcmark  &\tikzxmark   &\tikzcmark& 0.7443 & 0.6197 & 0.5191 & 0.8086	&0.6517	&0.5433 \\   
\tikzxmark  &\tikzcmark   &\tikzxmark& 0.7333 & 0.6262 & 0.5344 & 0.8135	&0.6508	&\bf0.5861 \\      
\tikzxmark  &\tikzcmark   &\tikzcmark& 0.7388 & \bf0.6273 & 0.5366 & 0.8111	&0.6496 &0.5860 \\      
\tikzcmark  &\tikzcmark   &\tikzcmark& \bf0.7639 & 0.6087 & \bf0.5388 & \bf0.8300	&\bf0.6557	&0.5544 \\      
\bottomrule
\end{tabular}
\end{table}

Then, we compare our method with some representative methods on actor-independent \CREMAD~dataset. The results are shown in Table~\ref{tab:cmp-re-split-cremad}. From the results in Table~\ref{tab:cmp-re-split-cremad}, we can find that our method can achieve the best performance compared with SOTA baselines.

\begin{table}[!htbp]
\centering
\caption{Comparison with SOTA baselines on actor-independent \CREMAD~dataset.}
\label{tab:cmp-re-split-cremad}
\begin{tabular}{c|cc}
\toprule
Method     &  Accuracy & MAP  \\\midrule
Audio/Video   & 0.6219/0.5366 & 0.6588/0.5380 \\
Naive MML  & 0.6153 & 0.6671 \\
MLA        & 0.7421 & 0.8100 \\
ReconBoost & 0.7344 & 0.8027 \\
Ours       & \bf0.7639 & \bf0.8300 \\
\bottomrule
\end{tabular}
\end{table}

\subsection{Influence of Unstable Confidence Score}
Since our method relies on confidence gap to determine whether to add a classifier, fluctuations in confidence scores are indeed an influential factor. We conduct an experiment to explore the fluctuation of confidence score in scenario with modality noise on \CREMAD~dataset. Specifically, we add Gaussian noise to the video modality to introduce instability in confidence scores and investigate its impact on the algorithm. We design a metric to evaluate the fluctuation of confidence score during training. We first compute the absolute difference in confidence scores between two consecutive iterations, and then take the average over all training rounds:
\begin{align}
\forall o\in\{a,v\},\;\bar s^o=\frac{1}{n}\sum_{t=2}^n\vert s^o_t-s^o_{t-1}\vert.
\end{align}
Here, $s^o_t$ denotes the confidence score of $o$-modality at $t$-th iteration, and $n$ denotes the current iteration index in the training process. A higher value of $\bar s^o$ indicates more severe fluctuations in confidence scores, reflecting a sharper and less stable prediction behavior. Furthermore, we compare our method with naive MML and ReconBoost~\cite{ReconBoost:conf/icml/CongHua24}.

We first compare the model performance in scenario with modality noise in Table~\ref{tab:confidence-score}, where $\zeta$ denotes the noise rate, ``\#CLS'' denotes the number of classifier during training, and in the first column, we also report the fluctuation of confidence score. In fact, as training progressed, we observe that in noisy scenarios, increasing the number of classifiers beyond a certain point no longer led to improved performance on the validation set. This phenomenon became more pronounced as the noise level increased. In other words, the classifiers may overcompensate in scenarios involving modality noise. On the other hand, we also observe that after reaching its optimal performance, the model's performance gradually degraded without a drastic drop. Therefore, in practical scenarios, an early stopping strategy based on validation set performance can be employed to select the optimal model and avoid classifier overcompensation.

\begin{table}[!htbp]
\centering
\caption{The fluctuation of confidence score.}  
\label{tab:confidence-score}
\begin{tabular}{r|cccc}
\toprule
\#CLS   & $\zeta=0,(\bar s^v=0.1428)$& $\zeta=20,(\bar s^v=0.3920)$& $\zeta=50,(\bar s^v=0.4760)$\\\midrule
1       &  0.1411  	                 &  0.1411	                   &  0.2863                     \\
2       &  0.6559                    &  0.6129                     &  0.6398                     \\
3       &  0.7634                    &  0.7608                     &  0.7728                     \\
4       &  0.8038                    &  0.7917                     &  0.8185                     \\
5       &  0.8266                    &  0.8118                     &  0.8212                     \\
6       &  0.8306                    &  0.8293                     &\bf0.8253                    \\
7       &  0.8401                    &\bf0.8333	                   &  0.8199                     \\
8       &  0.8441                    &  0.8266                     &  0.8239                     \\
9       &  0.8468                    &  0.8293                     &  0.8185                     \\
10      &\bf0.8515                   &  0.8280                     &  0.8145                     \\
\bottomrule
\end{tabular}
\end{table}

Furthermore, we compare the performance with competitive baselines including naive MML, ReconBoost~\cite{ReconBoost:conf/icml/CongHua24}. The results are shown in Table~\ref{tab:cmp-noise}. We can find that compared with the competitive method ReconBoost, our approach consistently achieves superior performance, demonstrating its effectiveness in scenario with modality noise.

\begin{table}[!htbp]
\centering
\caption{Performance comparison under modality noise scenario.}  
\label{tab:cmp-noise}
\begin{tabular}{r|ccc}
\toprule
Method      & $\zeta=0$   & $\zeta=20$   & $\zeta=50$   \\\midrule
Naive MML   & 0.6507      & 0.6425       & 0.6331       \\\midrule
ReconBoost  & 0.7557	  & 0.7215       & 0.7031       \\
Ours        &\bf 0.8515   & \bf 0.8333   & \bf 0.8253   \\
\bottomrule
\end{tabular}
\end{table}

\subsection{Computational and Memory Cost at Inference Stage}
To demonstrate the practical applicability of our approach, we further provide a comprehensive analysis of its computational and memory overhead at inference stage.\footnote{For computational cost during training, the analysis have been posted in Section~\ref{sec:further-analysis} of the original paper.} Furthermore, we conduct an experiment to analyze the computational and memory cost during inference phase on \CREMAD~dataset. The baselines include naive MML, PMR~\cite{PMR:conf/cvpr/Fan0WW023}, AGM~\cite{AGM:conf/iccv/LiLHLLZ23}, MLA~\cite{MLA:conf/cvpr/ZhangYBY24}, ReconBoost~\cite{ReconBoost:conf/icml/CongHua24}. The results are shown in Table~\ref{tab:cmp-inference-time}. We can find that our method achieves superior performance while maintaining competitive inference time. Furthermore, compared with naive MML, our method introduces only 1M additional parameters when 10 classifiers are added. Given the total model size of 23.6M, this corresponds to a relatively small increase of approximately 4\%.

\begin{table}[!htbp]
\centering
\caption{Computational cost comparison on \CREMAD~dataset.}  
\label{tab:cmp-inference-time}
\begin{tabular}{r|cc}
\toprule
Method    &  Accuracy   & Inference time~(s)  \\\midrule
Naive MML &  0.6507     & 5.29                \\
PMR       &  0.6659     & 6.59                \\
AGM       &  0.6733     & 5.58                \\
MLA       &  0.7943     & 5.63                \\
ReconBoost&  0.7557     & 5.33                \\
Ours      &  0.8515     & 5.62                \\
\bottomrule
\end{tabular}
\end{table}

\subsection{Selection of Score Function}
In our method, the score function is employed to quantify the disparity in classification performance between different modalities, which serves as the basis for determining whether to introduce an additional classifier for the weak modality. Therefore, any metric that effectively captures the difference in classification capabilities across modality-specific models can be adopted as a score function.

To some extent, entropy and loss can reflect the learning status of modality-specific models and thereby serve as proxies for their classification capabilities. To evaluate the effectiveness of using entropy and loss as score functions, we conducted experiments on the \CREMAD~dataset, where these two metrics were used to guide the adaptive classifier assignment. The experimental results are presented in Table~\ref{tab:score-selection}, where $\tau$ denotes the threshold. Please note that since we use the ratio between different modal metrics to compare against the threshold $\tau$, the value of $\tau$ remains consistent across different metric types such as confidence, loss, and entropy. These findings indicate that the proposed method is robust to the choice of score function, consistently achieving comparable results regardless of whether confidence score, entropy, or loss is used.

\begin{table}[!htbp]
\centering
\caption{Performance with different score function.}  
\label{tab:score-selection}
\begin{tabular}{r|ccc}
\toprule
Score function          &   Accuracy   &  Threshold $\tau$  & \#CLS   \\\midrule
Confidence score~(Ours)	&   0.8515     &  0.01              & 10      \\
Entropy                 &   0.8522     &  0.01              & 10      \\     
Loss                    &   0.8562     &  0.01              & 10      \\    
\bottomrule
\end{tabular}
\end{table}

\subsection{Computational Cost vs. The Number of Classifiers}
The computational overhead—including both training and inference time—is indeed influenced by the number of classifiers. Since our method dynamically adds classifiers during training, we incorporate a threshold to limit the maximum number of classifiers. This allows us to systematically investigate the trade-off between computational cost and model performance as a function of classifier quantity.

Specifically, we conduct experiments on the \CREMAD~dataset, where the maximum number of weak modality classifiers is constrained to not exceed a predefined threshold $M$. We vary $M$ across the set $\{2, 4, 6, 8, 10\}$, and for each setting, we report the training time, inference time, and the corresponding classification performance.

The experimental results are summarized in Table~\ref{tab:num-classifier}. We observe that as the number of classifiers increases, both training time and inference time increase accordingly, while performance also improves to a certain extent. This indicates that incorporating more classifiers can indeed enhance model performance, but at the cost of increased computational overhead.

\begin{table}[!htbp]
\centering
\caption{Computational cost vs the number of classifiers.}  
\label{tab:num-classifier}
\begin{tabular}{r|ccc}
\toprule
\#CLS    & Accuracy   & Training time~(hrs)  & Inference time~(s)  \\\midrule
2        & 0.8159     & 1.68                 & 5.31                \\
4        & 0.8387     & 1.72                 & 5.38                \\
6        & 0.8441     & 1.78                 & 5.46                \\
8        & 0.8468     & 1.86                 & 5.53                \\
10       & 0.8515     & 1.98                 & 5.62                \\
\bottomrule
\end{tabular}
\end{table}

\end{document}